\def\eqref#1{equation~\ref{#1}}
\def\1{\bm{1}}
\def\vs{{\bm{s}}}
\DeclareMathAlphabet{\mathsfit}{\encodingdefault}{\sfdefault}{m}{sl}
\SetMathAlphabet{\mathsfit}{bold}{\encodingdefault}{\sfdefault}{bx}{n}
\newcommand{\E}{\mathbb{E}}
\newcommand{\sigmoid}{\sigma}
\DeclareMathOperator*{\argmax}{arg\,max}
\newif\ifcomments
\newcommand{\cg}[1]{\textcolor{purple}{\bf\small [#1 --CG]}}
\newcommand{\mati}[1]{\textcolor{green}{\bf\small [#1 --MM]}}
\definecolor{darkblue}{RGB}{0, 0, 83}
\newcommand{\kmh}[1]{\textcolor{darkblue}{\bf\small [#1 --KMH]}}
\newcommand{\razp}[1]{\textcolor{magenta}{\bf\small [#1 --razvan]}}
\newcommand{\razpt}[1]{\textcolor{magenta}{\bf\small #1}}
\newcommand{\md}[1]{\textcolor{pink}{\bf\small [#1 -- MD]}}
\newcommand{\ar}[1]{\textcolor{orange}{\bf\small [#1 -- ali]}}
\newcommand{\yc}[1]{\textcolor{blue}{\bf\small [#1 --YC]}}
\newcommand{\cg}[1]{}
\newcommand{\mati}[1]{}
\newcommand{\kmh}[1]{}
\newcommand{\razp}[1]{}
\newcommand{\razpt}[1]{}
\newcommand{\md}[1]{}
\newcommand{\ar}[1]{}
\newcommand{\yc}[1]{}
\newcommand{\todo}[1]{}
\newcommand{\squishlist}{
   \begin{list}{$\bullet$}
    { \setlength{\itemsep}{0pt}      \setlength{\parsep}{3pt}
      \setlength{\topsep}{3pt}       \setlength{\partopsep}{0pt}
      \setlength{\leftmargin}{1.5em} \setlength{\labelwidth}{1em}
      \setlength{\labelsep}{0.5em} } }
\newcommand{\squishlisttwo}{
   \begin{list}{$\bullet$}
    { \setlength{\itemsep}{0pt}    \setlength{\parsep}{0pt}
      \setlength{\topsep}{0pt}     \setlength{\partopsep}{0pt}
      \setlength{\leftmargin}{2em} \setlength{\labelwidth}{1.5em}
      \setlength{\labelsep}{0.5em} } }
\newcommand{\squishend}{
    \end{list}  }
\newcommand{\calA}{{\cal A}}
\newcommand{\calB}{{\cal B}}
\newcommand{\calD}{{\cal D}}
\newcommand{\calL}{{\cal L}}
\newcommand{\calR}{{\cal R}}
\newcommand{\calS}{{\cal S}}
\newcommand{\data}{\calD}
\newcommand{\rhoinit}{{\rho_0}}
\newcommand{\trans}{P}
\newcommand{\Qhat}{\hat{Q}}
\newcommand{\grad}{{\nabla}}
\DeclareMathAlphabet{\mathpzc}{OT1}{pzc}{m}{n}
\newtheorem{theorem}{Remark}
\newtheorem{lemma}{Lemma}
\newtheorem{remark}{Remark}
\icmltitlerunning{Regularized Behavior Value Estimation}
\begin{document}

\twocolumn[

\icmltitle{Regularized Behavior Value Estimation}
\begin{icmlauthorlist}
\icmlauthor{Caglar Gulcehre}{deep}
\icmlauthor{Sergio Gómez Colmenarejo}{deep}
\icmlauthor{Ziyu Wang}{google}
\icmlauthor{Jakub Sygnowski}{deep}
\icmlauthor{Thomas Paine}{deep}
\icmlauthor{Konrad \.Zo\l{}na}{deep}
\icmlauthor{Yutian Chen}{deep}
\icmlauthor{Matthew Hoffman}{deep}
\icmlauthor{Razvan Pascanu}{deep}
\icmlauthor{Nando de Freitas}{deep}
\end{icmlauthorlist}
\icmlaffiliation{deep}{DeepMind, London, UK}
\icmlaffiliation{google}{Google Brain, Toronto, Canada}

\icmlcorrespondingauthor{Caglar Gulcehre}{caglarg@google.com}

\icmlkeywords{Machine Learning}
\vskip 0.3in
]
\printAffiliationsAndNotice{}


\begin{abstract}
Offline reinforcement learning restricts the learning process to rely only on logged-data without access to an environment. While this enables real-world applications, it also poses unique challenges. One important challenge is dealing with errors caused by the over-estimation of values for state-action pairs not well-covered by the training data. Due to bootstrapping, these errors get amplified during training and can lead to divergence, thereby crippling learning. 
To overcome this challenge, we introduce Regularized Behavior Value Estimation ($\mathtt{R}$-$\mathtt{BVE}$). Unlike most approaches, which use policy improvement during training, $\mathtt{R}$-$\mathtt{BVE}$ estimates the value of the behavior policy during training and only performs policy improvement at deployment time. Further, $\mathtt{R}$-$\mathtt{BVE}$ uses a ranking regularisation term that favours actions in the dataset that lead to successful outcomes. We provide ample empirical evidence of $\mathtt{R}$-$\mathtt{BVE}$'s effectiveness, including state-of-the-art performance on the RL Unplugged ATARI dataset. We also test $\mathtt{R}$-$\mathtt{BVE}$ on new datasets, from \texttt{bsuite} and a challenging DeepMind Lab task, and show that $\mathtt{R}$-$\mathtt{BVE}$ outperforms other state-of-the-art discrete control offline RL methods.
\end{abstract}

\section{Introduction}
Deep Reinforcement Learning (deep RL) is dominated by the online paradigm, where agents must interact with the environment repeatedly to explore and learn.
This paradigm has attained considerable success on Atari \citep{mnih2015humanlevel}, Go \citep{silver2017mastering}, StarCraft II and Dota 2 \citep{vinyals2019grandmaster, berner2019dota}, and robotics \citep{andrychowicz2020learning}.
However, the requirements of extensive interaction and exploration make these algorithms unsuitable and unsafe for many real-world applications. In contrast, in the offline setting \citep{fu2020d4rl, fujimoto2018addressing,gulcehre2020rl,levine2020offline}, also known as batch RL \citep{ernst2005tree, lange2012batch}, agents learn from a fixed dataset previously logged by other (possibly unknown) agents.
While the offline setting would enable the application of RL to real-world applications, current algorithms tend to perform worse than their online counterparts.
\begin{figure}[t]
    \centering
    \includegraphics[width=1.1\columnwidth]{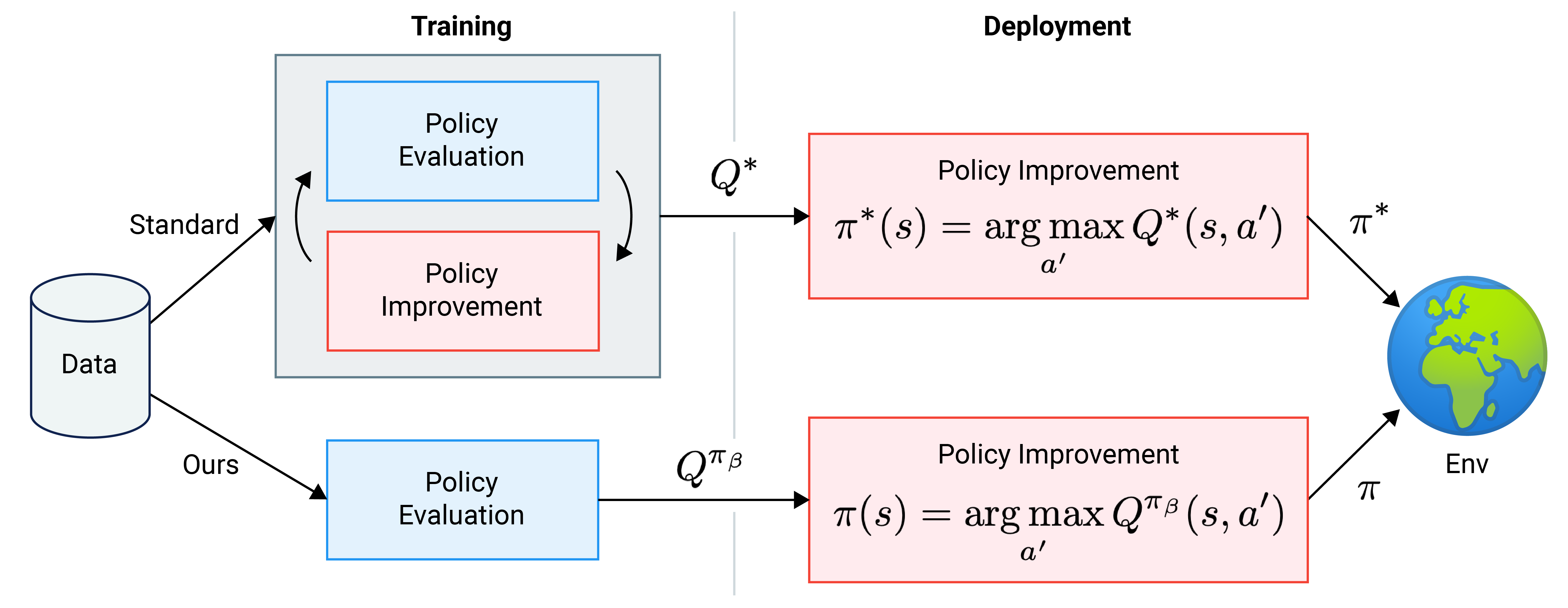}
    \caption{In offline settings, policy improvement is risky as it relies on extrapolating beyond the data. Nevertheless, standard approaches perform many iterations of policy improvement during training, resulting in extrapolation error and divergence. In contrast, \textbf{behavior value estimation} (Ours) estimates the value of the behavior policy, and performs only one step of policy improvement when the policy is deployed. Even one step of policy improvement requires extrapolation, so to make this safer we perform \textbf{ranking regularization} at training time to favour actions in the dataset that lead to successful outcome over unobserved actions. Our approach performs as well or better than state-of-the-art offline RL approaches across several domains.}
    \label{fig:online_v_offline}
\end{figure}

A key difference between online and offline RL is the impact of \emph{over-estimating} the value of unobserved actions (see Figure \ref{fig:over-estimation_underestimation}). In the online case, this over-estimation incentivizes agents to explore actions of high expected reward, thus learning by trial-and-error~\citep{schmidhuber1991possibility}. After trying a new action, the agent is able to update its values.
In contrast, in the offline case, the agent learns from a fixed dataset, and hence the agent does not get the opportunity to interact with the environment to gather new evidence to correct its values. As a result, the agent can become increasingly deluded.

To minimize the harm caused by over-estimation, we propose a new offline RL method that we refer to as \emph{Regularized Behavior Value Estimation} ($\mathtt{R}$-$\mathtt{BVE}$). For clarity of presentation, we introduce the ideas of \emph{behaviour value estimation} and \emph{ranking regularization} separately. We also show in the experiments that our proposed ranking regularization may be applied to improve deep Q-learning methods.

\textbf{Behavior value estimation}. Intuitively, instead of aiming to estimate the optimal policy $Q^{\pi^*}$, we focus on recovering the value of the behavioural policy $Q^{\pi_{\calB}}$. This amounts to removing $\max$-operators during training to prevent over-estimation. Subsequently, to improve upon the behavioral policy, we conduct a single step of policy improvement at deployment time. We found this simple method to work surprisingly well on the offline RL tasks we tried.

\textbf{Ranking regularization}. Behavior value estimation is however not enough to overcome the over-estimation problem.
For this reason, we introduce a $\max$-margin regularizer that encourages actions from rewarding episodes to be ranked higher than any other actions. Intuitively, this regularizer pushes down the value of all unobserved state-action pairs, thereby minimizing the chance of a greedy policy selecting actions under-represented in the dataset. Employing this regularizer during training minimizes the over-estimation impact of the $\max$-operator when used in combination with Q-learning. However, it also minimizes the impact of the $\max$-operator at deployment time, thus making the regularizer useful for behaviour value estimation.
The ranking regularizer can further push down the value of actions that do not appear in rewarding sequences, allowing the learned value function to improve upon the one corresponding to the behaviour policy. 

We evaluate our approach on the open-source RL Unplugged Atari dataset~\citep{gulcehre2020rl}, where we show that $\mathtt{R}$-$\mathtt{BVE}$ outperforms other offline RL methods. We show that $\mathtt{R}$-$\mathtt{BVE}$ performs better on two more datasets: \texttt{bsuite}  \citep{osband2019behaviour} and partially observable \texttt{DeepMind Lab} environments~\citep{beattie2016deepmind}\footnote{We released these datasets under \href{https://github.com/deepmind/deepmind-research/tree/master/rl_unplugged}{RL Unplugged github repo}.}.
We provide careful ablations and analyses that provide insights into our proposed method and existing offline RL algorithms. Empirically, we find out that $\mathtt{R}$-$\mathtt{BVE}$ reduces the over-estimation from extrapolation by orders of magnitude, and improves sample-efficiency significantly (see Figures~\ref{fig:over-estimation_q_values} and~\ref{fig:data_size_perf}).

\section{Background and Problem Statement}
\label{sec:background}

We consider a Markov Decision Process $(\mathcal{S}, \calA, \trans, R, \rhoinit, \gamma)$, where $\mathcal{S}$ is the set of all possible states and $\calA$ for all possible actions. For simplicity, we focus on discrete actions, though our methods can be extended to continuous actions. 
An agent starts in state $s_{0} \sim \rhoinit(\cdot)$ where $\rhoinit(\cdot)$ is a distribution over $\calS$ and takes actions according to its policy $a \sim \pi(\cdot|s)$, $a \in \calA$, when in state $s$.
Then it observes a new state $s'$ and reward $r$ according to the transition distribution $\trans(s'|s,a)$ and reward function $r(s,a)$.

The state-action value function $Q^{\pi}$ describes the expected discounted return starting from state $s$ and action $a$ and following $\pi$ afterwards:
\begin{align}
Q^{\pi}(s, a) &= \mathbb{E}\Bigg[\sum_{t=0}\gamma^t r(s_t, a_t)\Bigg],\\
s_0 = s, a_0 &= a, s_t \sim \trans(\cdot | s_{t-1}, a_{t-1}), a_t \sim \pi(\cdot | \vs_t), 
\end{align}
and $V^{\pi}(s) = \mathbb{E}_{a \sim \pi(\cdot | \vs)}{[Q^{\pi}(s, a)]}$ is the state value function. In this work, we are particularly interested in the scenario where neural networks are used as function approximators to estimate these functions due to their ability to scale and work on complex tasks.

The optimal policy $\pi^*$, which we aim to discover through RL, is one that maximizes the expected cumulative discounted rewards, or \emph{expected returns} such that $Q^{\pi^*}(s, a) \geq Q^{\pi}(s, a) \; \forall s, a, \pi$. 
For notational simplicity, we denote the policy used to generate an offline dataset as $\pi_{\calB}$.
For a state $s$ in an offline dataset, we use $\hat{V}^{\pi_{\calB}}(s)$ to denote the empirical estimate of $V^{\pi_{\calB}}(s)$ computed by summing future discounted rewards over the trajectory that $s$ is part of. 

RL algorithms can be broadly categorized as either \emph{on-policy} or \emph{off-policy}.
Whereas on-policy algorithms update their current policy based on data generated by itself, off-policy approaches can take advantage of data generated by other policies.
Algorithms in the mold of fitted Q-iteration
make up many of the most popular approaches to deep off-policy RL~\citep{mnih2015humanlevel, lillicrap2015continuous, haarnoja2018soft}.
This class of algorithms learns a $Q$ function by minimizing the Temporal Difference (TD) error. To increase stability and sample efficiency, experience replay is typically employed.
For example, DQN~\citep{mnih2015humanlevel} minimizes the following loss function with respect to $\theta$:
\begin{equation}
\calL_{\theta^{\prime}}(\theta) = \mathbb{E}_{(s, a, r, s^{\prime}) \sim \calD}[(Q_{\theta}(s, \hspace{-.5mm}a) - r - \gamma \max_{a^{\prime}}Q_{\theta^{\prime}}(s^{\prime}\hspace{-.5mm}, \hspace{-.5mm}a^{\prime}))^2],
\label{eqn:dqn}
\end{equation}
where $Q_{\theta^{\prime}}(s^{\prime}, a^{\prime})$ is a slowly changing target network, and  
$\calD$ is the replay dataset generated by a behaviour policy. Typically, for off-policy algorithms the behavior policy is periodically updated to remain close to the policy being optimized. A deterministic policy can be derived by defining $\pi(s) = \argmax_a Q(s, a)$.
Various extensions have been proposed, including variants with continuous actions~\citep{lillicrap2015continuous, haarnoja2018soft}, distributional critics \citep{bellemare2017a}, prioritized replays~\citep{schaul2015prioritized}, and n-step returns~\citep{kapturowski2018recurrent, barth-maron2018distributional,hessel2017rainbow}.

In the offline RL setting, agents learn from fixed datasets generated via other processes, thus rendering off-policy RL algorithms particularly pertinent.
Many existing offline RL algorithms adopt variants of Equation~(\ref{eqn:dqn}) to learn value functions; e.g. \cite{agarwal2019striving}.
Offline RL, however, is different from off-policy learning in the online setting. The offline RL datasets are usually finite and fixed, and does not track the policy being learned. When a policy moves towards a part of the state space not covered by the behavior policy(s), for example, one cannot effectively learn the value function. 
We will explore this in more detail in the next~subsection.

\begin{figure}[t]
    \centering
    \includegraphics[width=0.8\columnwidth]{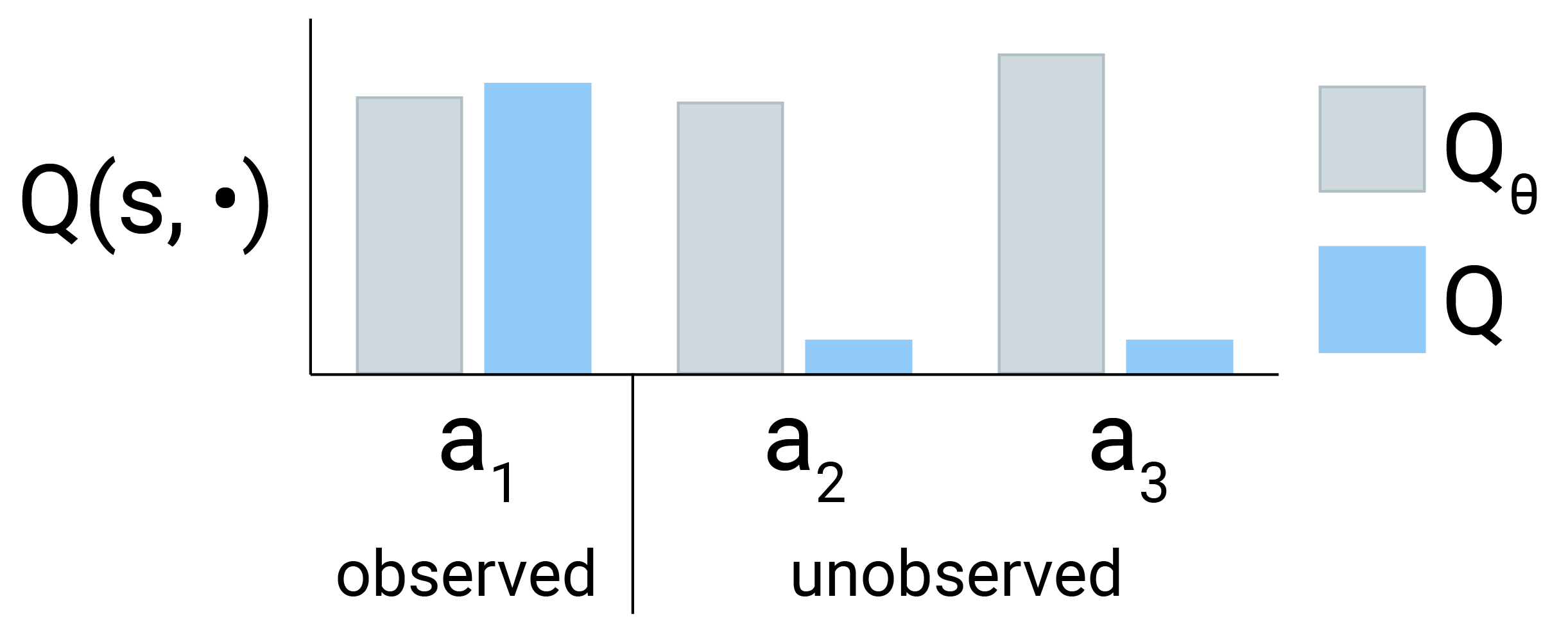}
    \caption{\textbf{Over-estimation from extrapolation.} $Q$ is the real value of an action $a$ in a state $s$, and $Q_{\theta}$ is the estimated value using function approximation. In this example $a_1$ is observed in the dataset while $a_2$ and $a_3$ unobserved. Applying the max operator would cause us to choose $a_3$. We refer to this phenomenon as over-estimation from extrapolation. While in online RL it leads to optimistic exploration followed by correction, in the offline case it results in growing value estimates and divergence. Regularized Behavior Value Estimation is designed to avoid over-estimation from extrapolation during training and deployment.
    \label{fig:over-estimation_underestimation}}
\end{figure}

\subsection{Extrapolation and over-estimation in offline RL}
\label{sec:extrapolation_error}
In the offline setting, when considering all possible actions for the next state in Equation~(\ref{eqn:dqn}), some of the actions will be \emph{out-of-distribution}
(OOD). This is because some actions are never picked in that state by the behavior policy used to construct the training set. That is, the state-action pairs are not present in the dataset. In such circumstances, we have to rely on the current Q-network to extrapolate beyond the training data, resulting in extrapolation errors when evaluating the loss\footnote{Neural networks trained by gradient descent typically struggle to fit under-represented modes of the data. Hence this argument also holds for rarely observed state-action pairs.}. Importantly, this extrapolation can lead to value over-estimation, as explained below. 

Value over-estimation (see Figure~\ref{fig:over-estimation_underestimation}) happens when the function approximator predicts a larger value than the ground truth. In short, taking the max over actions of several Q-network predictions, as in  Equation~(\ref{eqn:dqn}), leads to overconfident estimates of the true value of the state that are propagated in the learning process. 

For OOD actions, we depend on extrapolated values provided by the model $Q_{\theta'}$. We focus on neural networks as the function approximators. While neural networks are efficient learners, they will produce erroneous predictions on unobserved state-action pairs. Sometimes, these will be arbitrary high. These errors will be propagated in the value of other states via bootstrapping. 
Due to the smoothness of neural networks, by increasing the value of actions in the OOD state-action's neighborhood, the overestimated value itself might increase, creating a vicious loop. We remark that, in such a scenario, typical gradient descent optimization can diverge and escape towards infinity. See Appendix~\ref{sec:Theorem} for a formal statement and proof on this statement, though similar observations have been made before by ~\citet{fujimoto2019off} and \citet{achiam2019towards}. When the agent overestimates some state-action pairs in the online setting, they will be chosen more often. This leads to optimistic exploration, which happens both in the on-policy and off-policy setting. The distinction is that the behavior policy trails the learned one instead of being the same for the latter. The online agent would then act, collect data, and correct extrapolation errors. This form of self-correction is absent in the offline setting, and due to the over-estimation from extrapolation, the results can be catastrophic. Additionally, OOD actions' impact becomes larger in the low data regime, where the chance of observing OOD actions is greater and neural networks are more prone to extrapolation errors. 

We make a final note on the hardness of the setup we are interested in. To solve our problems, we require powerful function approximators that make analytical studies difficult without strong assumptions that might not hold in practice. Additionally, the tasks are complex large-scale tasks, which can make assumptions on the coverage of the state-action space by the training set improbable or at least impossible to assess. Therefore we mainly rely on empirical evidence to motivate our approach.
\section{Regularized Behavior Value Estimation}
\label{sec:solutions}

\subsection{Behavior Value Estimation ($\mathtt{BVE}$)}
\label{sec:bve} 

One particular starting point to deal with the vicious loop caused by over-estimation could be simply to phrase the problem as a supervised one, where given the data, a neural network is trained to produce for a given state the observed action. For the discrete action setting, the one we are focusing on, this is a well defined classification problem that will mimic the behaviour policy used to collect the data. 

However, offline RL aims to improve upon the behavior policy. To do so, many algorithms borrow machinery from the online scenario, as for example, Equation (\ref{eqn:dqn}) which through the $\max$-operator tries to improve the current policy by greedily picking the action with the largest estimated value. The repeated application of this improvement step can cause learning to diverge in the offline regime, as discussed earlier. Therefore, to break this cycle, we propose to limit the number of 
improvement steps being done, bounding the impact of over-estimation. In particular, given that policy iteration algorithms typically do not require more than a few steps to converge to optimal policy \citep[Chapter~4.3]{lagoudakis2003least, sutton2018reinforcement}, we focus on the extreme scenario where we are allowed to take a single improvement step. 

We start by drawing inspiration from the SARSA update~\citep{rummery1994line, van2009theoretical}. SARSA relies on the consecutive application of a \emph{policy evaluation} step, followed by a \emph{policy improvement} step. The policy evaluation step is described by the update below:
\begin{equation}
\calL_{\theta^{\prime}}(\theta) = \mathbb{E}_{(s, a, r, s^{\prime}, a^{\prime}) \sim \pi}(Q_{\theta}(s, \hspace{-.5mm}a) \hspace{-.5mm} - \hspace{-.5mm}r \hspace{-.5mm}- \hspace{-.5mm}\gamma Q_{\theta^{\prime}}(s^{\prime}\hspace{-.5mm}, \hspace{-.5mm}a^{\prime}))^2.
\label{eqn:sarsa}
\end{equation}
The policy improvement step is \emph{implicit} by defining the policy $\pi$ used to collect data greedily in terms of $Q_\theta$.
Therefore we can remove the policy improvement step by simply eliminating this dependency of $\pi$ on $Q_\theta$. We propose \emph{Behaviour Value Estimation} as applying Equation~(\ref{eqn:sarsa}) on transitions  $(s,a, r, s^\prime, a^\prime)$ collected by the behaviour policy, leading to a Q-function reflecting the value of this policy.

Finally, we take a single \emph{policy improvement} step at deployment by acting greedily with respect to $Q_\theta$, namely  $\pi(s) = \argmax_a Q_{\theta}(s, a)$. A single improvement step can not 
guarantee convergence to an optimal behaviour in general. However, our approach will effectively bound the impact of extrapolation 
errors and empirically provides significant improvement on the tasks considered, see for example Figure~\ref{fig:dmlab_perfs}. This is particularly true in the low data regime where neural networks are less reliable and repeated policy improvement steps are prone to lead to an amplified over-estimation of OOD actions, as observed empirically in Figure~\ref{fig:over-estimation_q_values}. To highlight the efficiency of a single improvement step, we provide through the lemmas in Appendix~\ref{sec:optimal_policy_conditions} some sufficient conditions for a single improvement step to lead to optimal behaviour. 
\subsection{Ranking regularization ($\mathtt{R}$-$\mathtt{}$)}
Behaviour Value Estimation effectively reduces over-estimation during training by avoiding policy improvement steps but it also avoids estimating the $Q$ values of OOD actions and focuses on recovering the $Q$ values of the observed data under the behaviour policy. In contrast to the tabular case, where all OOD actions will have a default value of $0$, neural networks will extrapolate based on the observed~data.

We choose to learn a more conservative $Q$ function that avoids picking actions not seen during training to mitigate the impact of the extrapolation. To this end, here, we simply regularize the output of the Q-network to rank the observed actions higher than the OOD ones for successful episodes. 

This ranking strategy can be formulated with a hinge-loss~\citep{chen2009ranking,burges2005learning}. We adopt the squared-hinge loss which is commonly used for RankSVM \citep{chapelle2010efficient}. Given a transition from the dataset $\left(s_t, a_t\right)$ we introduce the following loss:
\begin{equation}
    \mathcal{C}(\theta) = \hspace{-2mm} \sum_{i=0,i \neq t}^{|\calA|} \hspace{-2mm} \max\left(Q_{\theta}(s_t,a_i) - Q_{\theta}(s_t,a_t) + \textcolor{red}{\nu}, 0\right)^2,
    \label{eqn:quirk_reg_main}
\end{equation}
which states that if the value of any action $a_i$ is higher by a a certain margin than the value of the dataset action $a_t$, then the value function needs to be adjusted. 

Previously, hinge losses were used for regularization in RL, but with different goals from ours \citep{su2020conqur,pohlen2018observe}.

Blindly applying this ranking loss on suboptimal data can have an undesirable effect. This ranking loss mostly focuses on the frequency of an action in the dataset, promoting frequent actions to have larger value than infrequent ones. For suboptimal behaviour policies is likely that frequency does not correlate with high values.

One practical answer to this problem is to filter out  trajectories that are not sufficiently rewarding, reducing the frequency of state-actions that are not relevant for good behaviour. We adopt a soft filtering approach, which simply re-weights each transition with the normalized value of the trajectory:
\begin{align}
    \textcolor{blue}{\mathtt{w}(s)} &= \exp((\hat{V}^{\pi_{\calB}}(s) - \E_{s\sim \data}[\hat{V}^{\pi_{\calB}}(s)])/\textcolor{cyan}{\beta})\\ \label{eqn:quirk_reg_final}
    \mathcal{C}(\theta) &=  \textcolor{blue}{\mathtt{w}}(s) \hspace{-2mm} \sum_{i=0,i \neq t}^{|\calA|} \hspace{-2mm} \max\left(Q_{\theta}(s,a_i) - Q_{\theta}(s,a_t) + \textcolor{red}{\nu}, 0\right)^2, \nonumber
\end{align}
where again $\hat{V}^{\pi_{\calB}}(s)$ denotes the empirical estimate of $V^{\pi_{\calB}}(s)$ computed by summing future discounted rewards over the trajectory that $s$ is part of. The expectation
 $\E_{s\sim \data}[\hat{V}^{\pi_{\calB}}(s)]$ is estimated with an average over mini-batches. 
 
The weight $\textcolor{blue}{\mathtt{w}(s)}$ can be thought of as a success filter. The formulation of this filtering mechanism is based on the advantage based filtering mechanism used for the policy in CRR \citep{wang2020critic}. In our preliminary experiments, we tried both binary filtering with an indicator function and advantage based filtering, but we found that the filtering mechanism introduced above works the best on discrete control algorithms we study here.  In the online regime, the filter amplifies the choice of actions in trajectories that fair better than expected, behaving similarly to a policy improvement step. In particular, our loss is closely related to Ranking Policy Gradient~\citep{DBLP:conf/iclr/LinZ20}. While the same machinery of this work cannot be used to prove convergence of our loss due to the offline character of our regime, it provides intuition of why the filter biases the learning in the right direction. Compared with a typical policy improvement step, this loss, however, also ensures that OOD actions (and actions in non-rewarding sequences) rank lower in the policy.

The ranking loss can be successfully combined with Q-learning, which we will refer as $\mathtt{R}$-$\mathtt{DQN}$ or, better with behaviour value estimation, $\mathtt{R}$-$\mathtt{BVE}$. It is due to this reweighting that $\mathtt{R}$-$\mathtt{BVE}$ can not only further mitigate over-estimation compared to $\mathtt{BVE}$ but it can also improve on the behaviour policy during training. In all our experiments, we fixed $\beta$ to be $0.5$, a hyperparameter value borrowed from CRR \citep{wang2020critic}.  

\section{Related work}

Early examples of offline/batch RL include least-squares temporal difference methods~\citep{bradtke1996linear,lagoudakis2003least} and fitted Q iteration~\citep{ernst2005tree, riedmiller2005neural}. More recent offline RL approaches fall into three broad categories: 1) \textbf{policy-constraint} approaches regularize the learned policy to stay close to the behavior policy either explicitly or implicitly \citep{fujimoto2019off, kumar2019stabilizing, jaques2019way, siegel2020keep, wang2020critic, ghasemipour2020emaq}, 2) \textbf{value-based} approaches lower value estimates for unseen state-actions pairs, either through regularization or uncertainty \citep{kumar2020conservative, agarwal2019optimistic}. 3) \textbf{model-based} approaches similarly lower reward estimates for unseen state-action pairs \cite{yu2020mopo, kidambi2020morel}. 

The above methods perform policy improvement during training, and use the methods described above to mitigate extrapolation error and divergence. Our approach is fundamentally different. We perform no policy improvement during training, instead estimating the value of the behavior policy. This approach is related to policy-constraint methods but has three advantages: 1) we do not need to estimate a behavior policy, 2) we do not constrain our policy to be similar to the behavior policy, which may be a poor constraint when the behavior policy is a mixture of many policies some of which are sub-optimal and 3) we are guaranteed to avoid extrapolation errors during training. Additionally, our method uses value-based regularization but only to mitigate extrapolation error when we deploy our policy. Our value-based regularization is most similar to regularization used in CQL \citep{kumar2020conservative}. There are two main differences 1) our regularization is weighted by the success of the trajectory and 2) we use a max-margin ranking loss. We find these work better in practice.

\section{Experiments}

We investigate the performance of offline RL algorithms with discrete actions across three domains: \texttt{bsuite}, Atari, and DeepMind Lab. The main question we want to answer is: how do algorithms perform when there is low coverage of state-action pairs? In that context, we study multiple factors that affect coverage including dataset size (see Figure~\ref{fig:over-estimation_q_values} and \ref{fig:data_size_perf}), noise (see Figure~\ref{fig:bsuite_experiments} and~\ref{fig:dmlab_stochasticity_env}), and partial observability (Figure~\ref{fig:dmlab_perfs}).

We compare our proposed model $\mathtt{R}$-$\mathtt{BVE}$ (Regularized Behavior Value Estimation), its ablations $\mathtt{BVE}$ (Behavior Value Estimation) and $\mathtt{R}$-$\mathtt{DQN}$ (Regularized Deep Q-learning), and several offline reinforcement learning baselines. For the Atari domain we used the RL Unplugged Atari benchmark \citep{gulcehre2020rl}. We also create additional datasets using \texttt{bsuite} to have fast diagnostic tests, and DeepMind Lab to have a challenging partially observable domain. We are working to open-source these datasets. The details of the datasets are provided in Appendix \ref{sec:datasets}. In all our experiments, for our CQL baseline, we used CQL($\mathcal{H}$) with a fixed $\alpha$ as prescribed in \cite{kumar2020conservative} for Atari. 

Overall, our method has two hyperparameters that we tuned: $\textcolor{red}{\nu}$ for the margin, and the regularization coefficient $\lambda$ for the ranking regularization. Our method seems to be quite robust to those values. We tuned $\lambda$ only on Atari, and found out that $0.005$ worked best on nine Atari online policy selection games. We used this value for all other tasks and games. We fixed $\textcolor{red}{\nu}=0.05$ for both Atari and \texttt{bsuite} without any hyperparameter tuning.  In particular, these hyperparameters fared well  for the offline policy selection games. This provides an evidence that, for a real-world application, one might be able to tune the regularization hyperparameters in a simulated version of the environment and deploy them in the setting of interest. However, on DeepMind Lab, based on a coarse grid for the margin hyperparameter $\textcolor{red}{\nu} \in \{0.5, 0.05\}$, we found some variation for the optimal value. On our \texttt{bsuite} and DeepMind Lab plots, we report median and standard error bars across different seeds. On Atari, as accustomed in the literature, we only report median across different Atari games.

\subsection{\texttt{bsuite} Experiments}
\label{sec:bsuite}

\texttt{bsuite} \citep{osband2019behaviour} is a proposed benchmark designed to highlight key aspects of an agent's scalability such as exploration, memory or credit assignment. We generated low-coverage offline RL datasets for {\it catch}, {\it mountain\_car} and {\it cartpole} by recording the experiences of an online agent during training, as described by \cite{agarwal2019optimistic}, and then subsampling it (see Appendix \ref{sec:bsuite_dataset} for details.) We generated two versions of datasets for each task: stochastic vs. deterministic. The stochastic data is obtained by injecting noise into transitions by replacing an agent's actions with a random action with probability $\epsilon=0.25$.

On \texttt{bsuite} experiments, our goals are: firstly to have a computationally cheap setting to compare our proposed methods against the state-of-the-art baselines on diagnostic tasks and, secondly, verify that our method is robust against the stochasticity in the transitions.
\begin{figure}[t]
    \centering
    \includegraphics[width=\columnwidth]{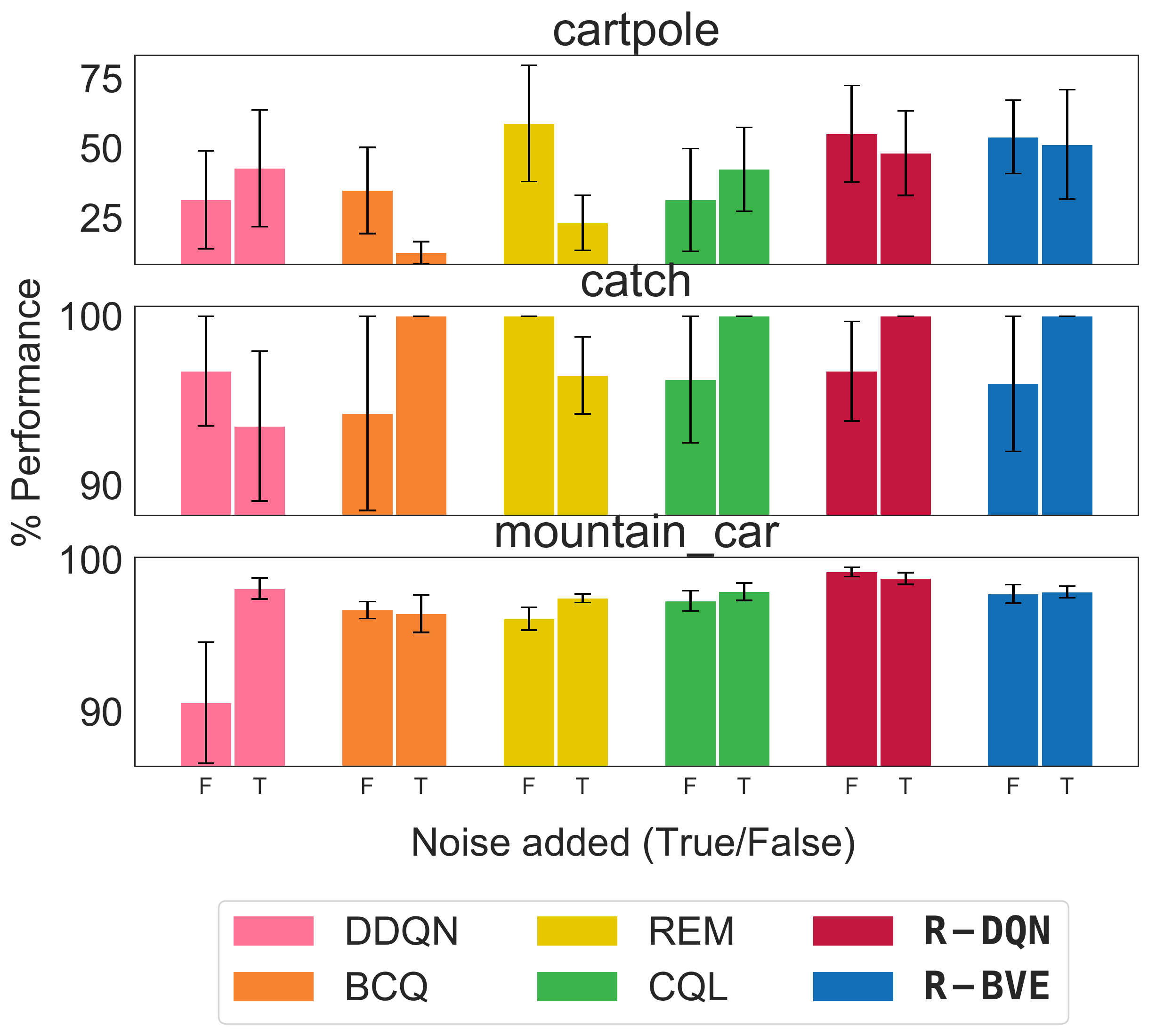}
    \caption{{\bf Bsuite Experiments.} \texttt{bsuite} results on three environments without, and with noise injected into the actions in the dataset. The proposed method, $\mathtt{R}$-$\mathtt{BVE}$, outperforms all the baselines on {\it cartpole}. Methods implementing a form of behavior constraining (BCQ, CQL and our methods $\mathtt{R}$-$\mathtt{BVE}$ and $\mathtt{R}$-$\mathtt{DQN}$) excel on {\it catch} (with noise), stressing its importance.}
    \label{fig:bsuite_experiments}
\end{figure}

In Figure~\ref{fig:bsuite_experiments}, we compare the effectiveness of $\mathtt{R}$-$\mathtt{BVE}$ and $\mathtt{R}$-$\mathtt{DQN}$ against four baselines: DDQN \citep{hasselt2010double}, CQL($\mathcal{H}$) \citep{kumar2020conservative}, REM \citep{agarwal2019optimistic} and BCQ \citep{fujimoto2018addressing}. On the harder dataset ({\it cartpole}), $\mathtt{R}$-$\mathtt{BVE}$ and $\mathtt{R}$-$\mathtt{DQN}$, our proposed methods, outperform all other approaches on the noisy datasets showing the efficiency and robustness of our approach. Two other methods, REM and CQL, also perform relatively well in the noisy setting. The results for {\it catch} are similar, with the exception that BCQ also has better normalized score which re-emphasises the importance of restricting behavior to stay close to the observed data.
On {\it mountain\_car}, all methods almost reaches the performance of the expert generated the dataset on the noisy dataset. However, DDQN performed poorly when there is no noise in the transitions. We think, this might be because the noise injected into the dataset although increases stochasticity, it also increases the (state, action)-coverage of the dataset. Overall, $\mathtt{R}$-$\mathtt{BVE}$ and $\mathtt{R}$-$\mathtt{DQN}$ seem to be less effected by this noise injected.

\subsection{Atari Experiments}
\label{sec:atari_experiments}

Atari is an established online RL benchmark \citep{bellemare2013arcade}, which has recently attracted the attention of the offline RL community \citep{agarwal2019optimistic,fujimoto2019benchmarking}. Here, we used the experimental protocol and datasets from the RL Unplugged Atari benchmark \citep{gulcehre2020rl}. We report the median normalized score across the Atari games as prescribed by \cite{gulcehre2020rl}.

\begin{figure}[t]
    \centering
    \includegraphics[width=\columnwidth]{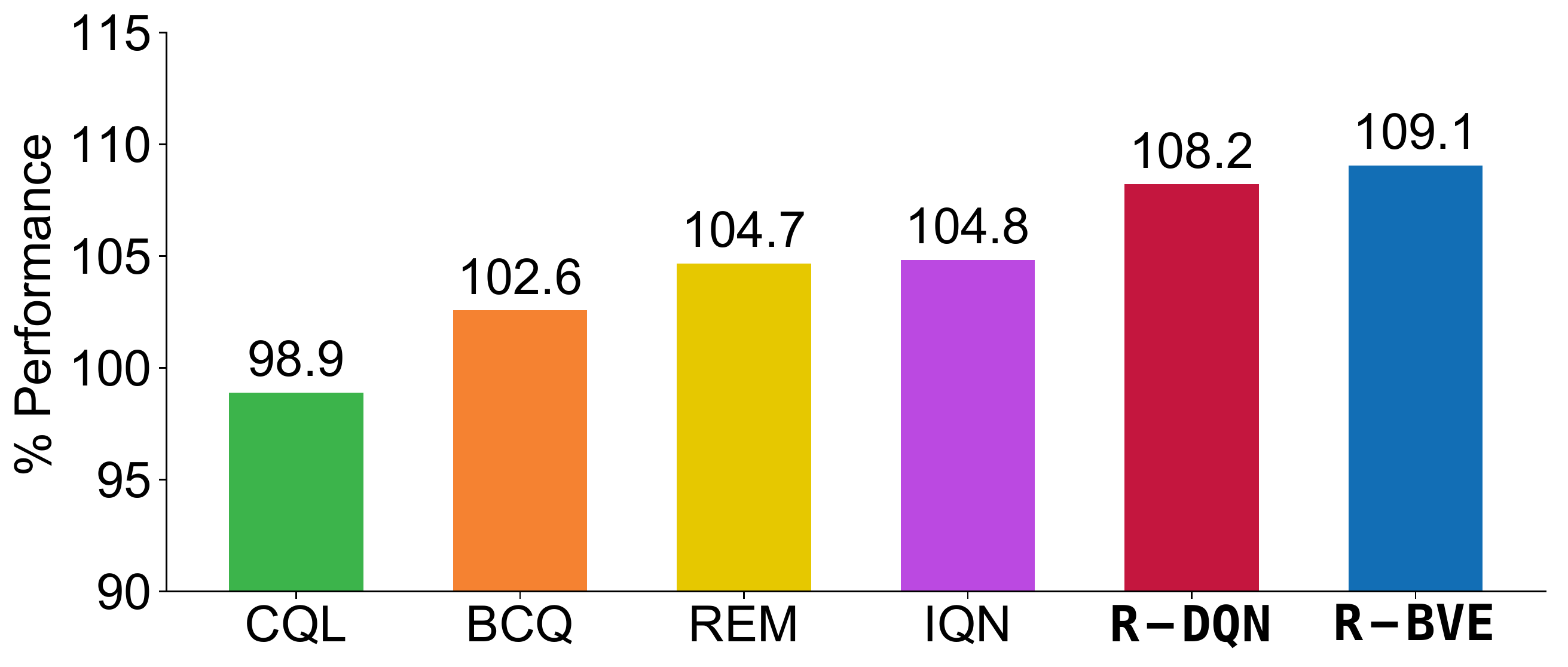}
    \caption{{\bf Atari offline policy selection games results.} We compare our proposed $\mathtt{R}$-$\mathtt{BVE}$ against other recent state-of-the-art offline RL algorithms with hyperparameters tuned only on offline policy selection games. Our methods prove to be the best.}
    \label{fig:atari_testing_results}
\end{figure}
In Figure~\ref{fig:atari_testing_results}, we show that $\mathtt{R}$-$\mathtt{BVE}$ outperforms all baselines reported in the RL Unplugged benchmark as well as CQL($\mathcal{H}$) \citep{kumar2020conservative} on offline policy selection games. Both $\mathtt{R}$-$\mathtt{DQN}$ and $\mathtt{R}$-$\mathtt{BVE}$ outperform other SOTA offline RL methods. This experiment highlights an important point: on a large benchmark suite (including 37 Atari games) a single step of policy improvement is sufficient to outperform several offline RL algorithms including policy contraint (BCQ), value-based regularization (CQL), and value-based uncertainty methods (REM, IQN). Though we should note, in this setting there is enough data for the neural networks to learn reasonable approximations of the Q-value (exploiting the structure of the state space to extrapolate reasonably for unobserved state-action pairs.) This figure also highlights the robustness of ranking regularization to its hyperparameters since we did not do any hyperparameter search on the offline policy selection games.

\begin{figure}[t]
  \includegraphics[width=\columnwidth]{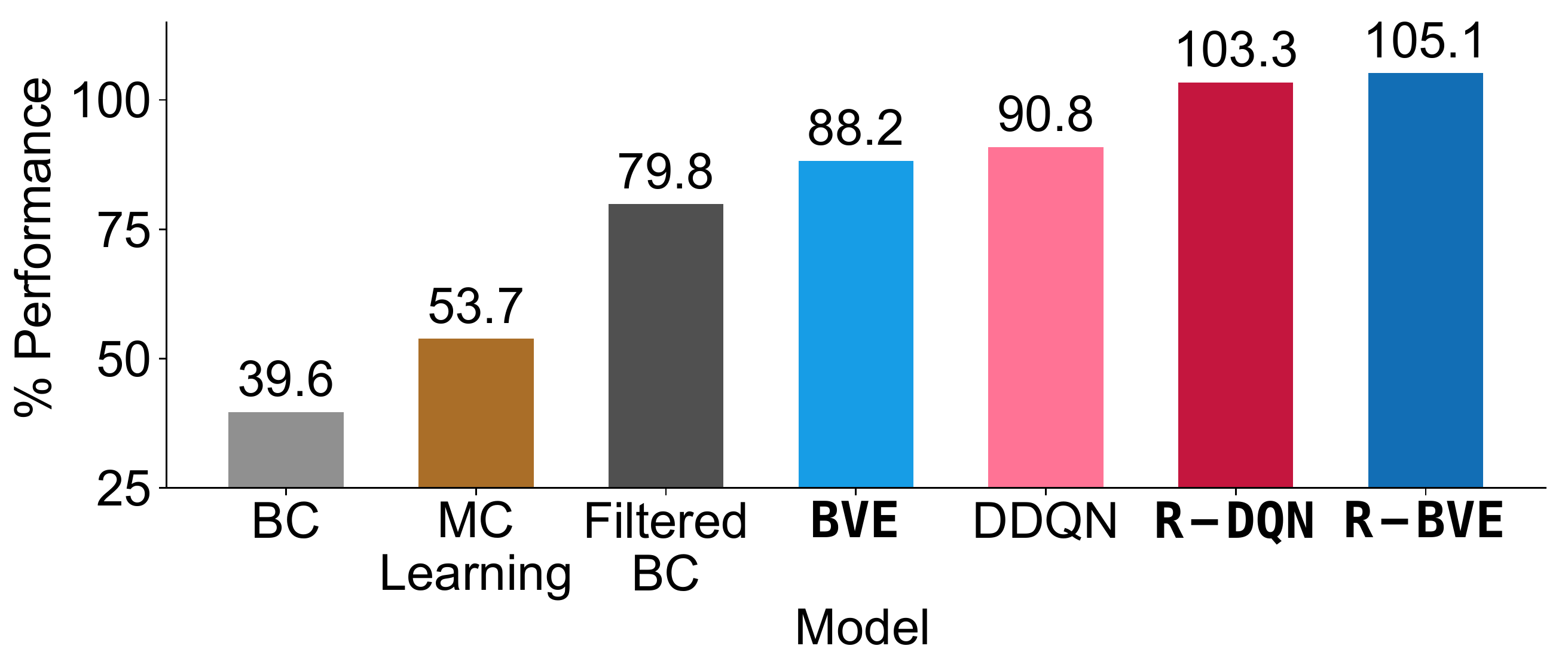}
  \caption{{\bf Atari ablations (online policy selection games).} We compare $\mathtt{BVE}$, DDQN and Monte Carlo Learning, and ranking regularization in RLU Atari dataset for online policy selection games. $\mathtt{BVE}$ and DDQN attain similar median episodic returns, but learning with Monte Carlo returns performs poorly. We found that the most significant improvement from the ranking regularization term ($\mathtt{R}$-$\mathtt{DQN}$ and $\mathtt{R}$-$\mathtt{BVE}$). Although the combination of ranking regularization and $\mathtt{BVE}$ performs the best.
  \label{fig:QUIRK_ablations}}
\end{figure}
\vspace{-2mm}
\paragraph{Ablation Experiments on Atari}
We ablate two different aspects of our algorithm on nine \textbf{online policy selection games} from RL Unplugged Atari suite: i) the choice of TD backup updates (Q-learning or behavior value estimation), ii) the effect of ranking regularization. We show the ablation of those three components in Figure~\ref{fig:QUIRK_ablations}. We observed the largest improvement when using ranking regularization. In general, we found that directly using Monte-Carlo estimation for the value function (we refer this in our plot as "MC Learning") does not work on Atari. We also provide results with behavior cloning (BC) \citep{pomerleau1989alvinn} and filtered BC --- BC that is only trained on highly rewarding episodes. These are episodes for whom their episodic return is greater than a thereshold, where we set this threshold  to be the mean of episodic return in the whole dataset. We showed that BC on the whole dataset works very poorly, but filtered BC works considerably better. Nevertheless, filtered BC still performs considerably worse than other offline RL methods such as $\mathtt{BVE}$.

\begin{figure}[t]
    \centering
    \includegraphics[width=\columnwidth]{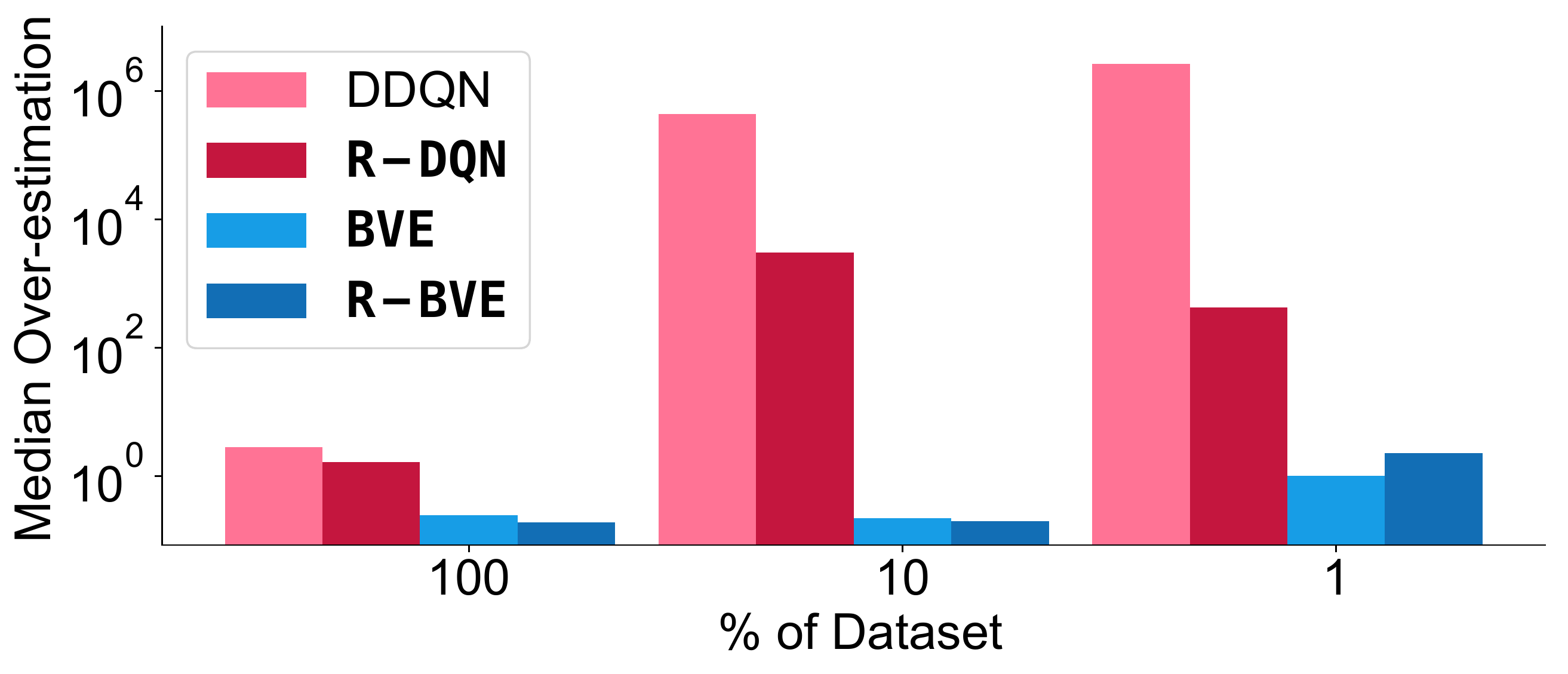}
    \caption{\textbf{Over-estimation of Q-values.} We report median over-estimation error over online policy selection games on Atari with sub-sampled Atari datasets (\% of dataset). As dataset size decreases DDQN is prone to severe over-estimation. $\mathtt{R}$-$\mathtt{DQN}$ reduces this to some extent but still over-estimates severely. Finally, $\mathtt{BVE}$ and $\mathtt{R}$-$\mathtt{BVE}$ significantly reduce over-estimation.
    \label{fig:over-estimation_q_values}}
\end{figure}
\paragraph{Over-estimation Experiments}
One source for over-estimation of Q-learning is the maximization bias~\cite{hasselt2010double}. In the offline setting, another source, as discussed in Section~\ref{sec:background}, is due to extrapolation errors.  Double DQN (DDQN by \citet{hasselt2010double}) is supposed to address the first problem, but it is unclear whether it can address the second. In Figure~\ref{fig:over-estimation_q_values}, we show that in the offline setting DDQN still over-estimates severely when we evaluate the critic's predictions in the environment. This suggests the second source, dominates and DDQN does not address it. In comparison, $\mathtt{R}$-$\mathtt{DQN}$ overestimates significantly less, highlighting the efficiency of the ranking regularization, however its performance degrades by many orders of magnitude in the low data regime.   Finally, $\mathtt{BVE}$  and $\mathtt{R}$-$\mathtt{BVE}$
drastically reduce over-estimation in all settings. 
In the figure, we compute the over-estimation error by evaluating the methods in the environment and computing $\frac{1}{100}\sum_{i=0}^{100}(\max(Q^{\pi}(s, a) - G^{\pi}(s), 0))^2$ over $100$ episodes, where $G^{\pi}(s)$ corresponds to the discounted sum of rewards from state $s$ till the end of episode by following the policy~$\pi$.
\begin{figure}[t]
    \centering
    \includegraphics[width=\columnwidth]{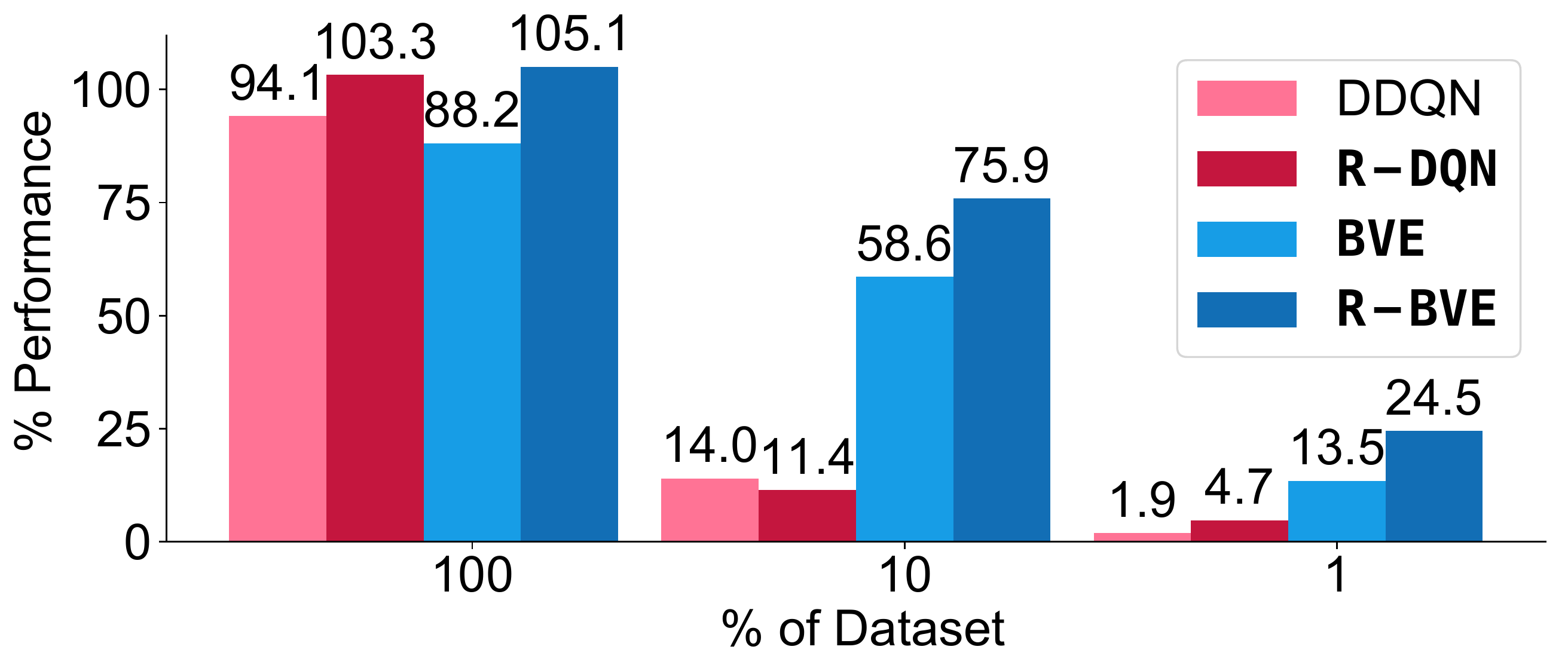}
    \caption{{\bf Sample Efficiency.} Normalized scores of DDQN, \mbox{$\mathtt{R}$-$\mathtt{DQN}$}, $\mathtt{BVE}$ and $\mathtt{R}$-$\mathtt{BVE}$ on subsets of data from online policy selection games. On the smaller subsets of the dataset $\mathtt{BVE}$ and $\mathtt{R}$-$\mathtt{BVE}$ performs comparatively better than DQN and $\mathtt{R}$-$\mathtt{DQN}$. The Q-learning suffers more since the coverage of the dataset reduces with the subsampling causing more severe extrapolation error.
    \label{fig:data_size_perf}}
\end{figure}

\paragraph{Robustness Experiments}

According to Figure \ref{fig:QUIRK_ablations}, $\mathtt{BVE}$ and Q-learning both have similar performance on the full dataset, however, in low data regimes, the behavior value policy considerably outperforms Q-learning (see Figure~\ref{fig:data_size_perf}). The poor performance of DQN and $\mathtt{R}$-$\mathtt{DQN}$ in the lower data regime is potentially due to the over-estimation that we showed in Figure \ref{fig:over-estimation_q_values}. Furthermore, in Appendix \ref{sec:apx:atari_robustness} (see Figure \ref{fig:SARSA_vs_DQN_reward_filtering}), we investigate the robustness of $\mathtt{BVE}$ and DDQN with respect to the reward distribution. We found that the performance of $\mathtt{BVE}$ is more robust than DDQN to  variations of the reward distribution.

\begin{figure*}[htbp!]
    \centering
    \includegraphics[width=\linewidth]{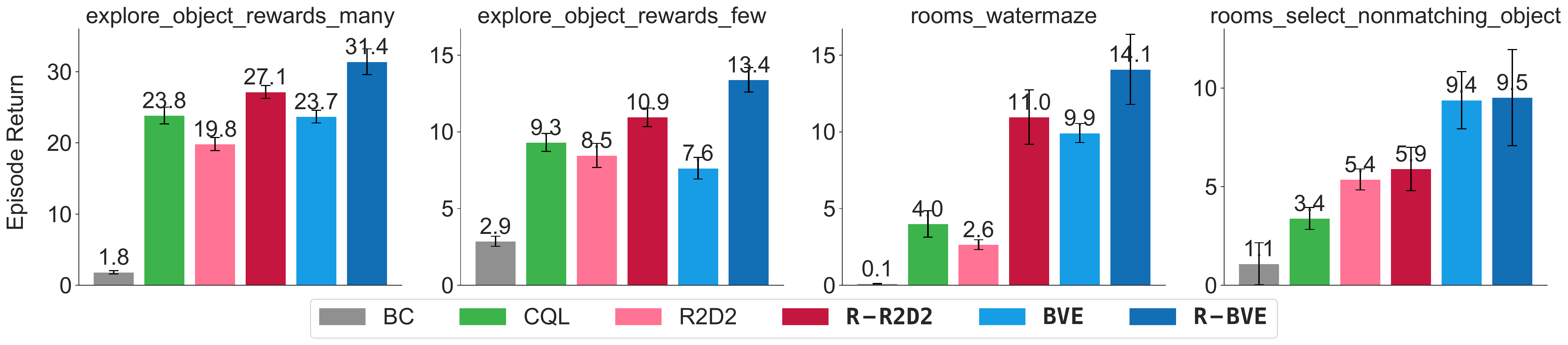}
    \caption{{\bf DeepMind Lab Results:} We compare the performance of different baselines on challenging DeepMind Lab datasets coming from four different DeepMind Lab levels. Our method, $\mathtt{R}$-$\mathtt{BVE}$, consistently performs the best.}
    \label{fig:dmlab_perfs}
\end{figure*}

\subsection{DeepMind Lab Experiments}
\label{sec:dmlab}

Offline RL research mainly focused on fully observable environments such as Atari. However, in a complex partially observable environment such as Deepmind Lab, it is very difficult to obtain good coverage in the dataset even after collecting billions of transitions. 
To highlight this, we have generated datasets by training an online R2D2 agent on DeepMind Lab levels. Specifically, we have generated datasets for four of the levels: $\mathtt{explore\_object\_rewards\_many}$, $\mathtt{explore\_object\_rewards\_few}$, $\mathtt{rooms\_watermaze}$, and $\mathtt{rooms\_select\_nonmatching\_object}$. The details of the datasets are provided in the Appendix~\ref{sec:dmlab_dataset}.

We compare offline BC, CQL, R2D2, $\mathtt{R}$-$\mathtt{R2D2}$, $\mathtt{BVE}$ and $\mathtt{R}$-$\mathtt{BVE}$ on our DeepMind Lab datasets. We use the same architecture, so the main difference is in the loss function. To test our hypothesis that partially observable environments accentuate the coverage issue we consider the large data regime, using 300M transitions stored during the online training of R2D2. In Figure \ref{fig:dmlab_perfs}, we show the performance of each algorithm on four different levels. Our proposed modifications, $\mathtt{R}$-$\mathtt{BVE}$ and $\mathtt{BVE}$ outperform other offline RL approaches on all DeepMind Lab levels. We argue that poor performance of R2D2 in the offline setting is due to the implicit low coverage of the dataset. Even at 300M transitions do not seem enough, potentially due to the partially observable nature of the environment as well as its diversity. 

\paragraph{The importance of coverage in offline RL: }

Here, we investigate the effect of coverage on the DeepMind Lab $\mathtt{seekavoid\_arena\_01}$ level on a dataset generated by a fixed policy. To do so, we generated two datasets.
The first relies on a R2D2 snapshot, as the behaviour policy, which we refer to as \emph{Expert Data}. The second uses a noisy version of this policy, where $75\%$ of decisions are taken by the R2D2 agent, and the remaining $25\%$ rely on a uniform policy over actions, referred as \emph{Noisy Expert Data}. 
Figure \ref{fig:dmlab_stochasticity_env} depicts the result on these two datasets.
BC outperforms all offline RL approaches on the noiseless expert dataset. However, introducing the noise into the actions deteriorates BC's performance considerably, and $\mathtt{R}$-$\mathtt{BVE}$ outperforms it. 
Over all, the episodic returns obtained by offline RL methods either improve (R2D2, $\mathtt{BVE}$,  $\mathtt{R}$-$\mathtt{BVE}$) or stay unaffacted (CQL, $\mathtt{R}$-$\mathtt{R2D2}$)  on the noisy expert data compared to their performance on the expert data. This is most likely due to the fact that the noisy expert data provides better coverage. Given that the environment is deterministic, the expert data follows mostly the optimal trajectory offering a relative low coverage of the state-action space.

\begin{figure}[t]
\vspace{-0.3cm}
    \centering
    \includegraphics[width=\columnwidth]{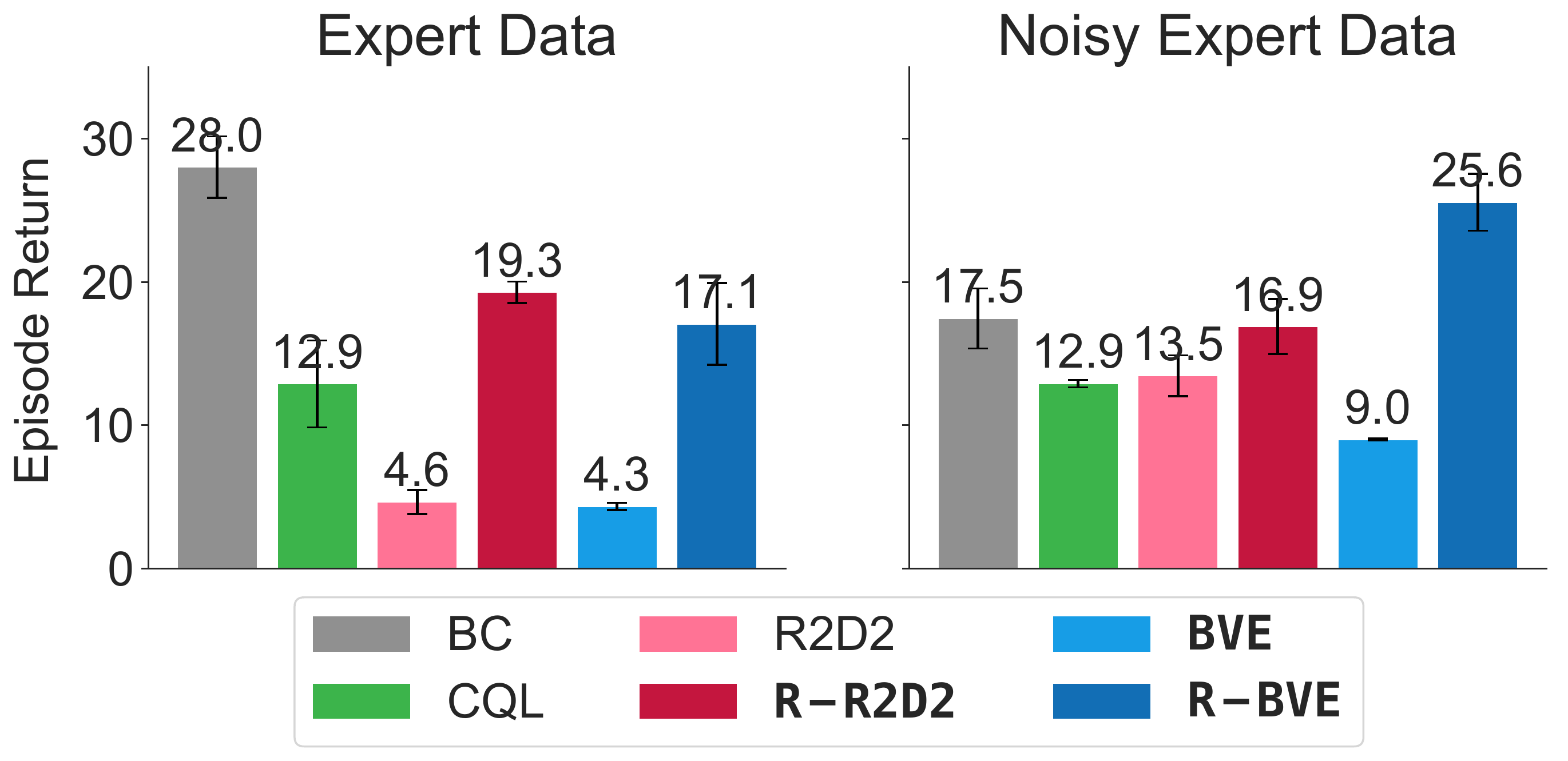}
    \caption{{\bf Effect of coverage in the expert demonstrations:} Given access to data from an optimal policy only, BC performs well, where as offline RL methods struggle due to insufficient state-action coverage. Adding noise to the policy, in a way simulating the human demonstrations, increases coverage. In this setting BC performs worse, but R-BVE is able to match the performance of BC from optimal data.}
    \label{fig:dmlab_stochasticity_env}
\end{figure}

\section{Discussion}

In this work we investigate the deep offline RL setting with discrete actions. In such settings, overestimation errors of the neural network approximator get propagated in the value function through the bootstraping. This can lead to a cycle, where the overestimation gets amplified with every policy improvement step, in the worst case scenario leading these values to escape to infinity. We propose behaviour value estimation $\mathtt{BVE}$ an algorithm that limits the number of policy improvements steps to one. Empirically, we showed that this single policy improvement step at deployment is enough for the relatively complex tasks we have considered.\\

Behavior value estimation is however not enough to overcome the overestimation problem. For this reason, we introduce a max-margin regularizer that encourages actions from rewarding episodes to be ranked higher than any other actions. This leads to the proposed algorithm $\mathtt{R}$-$\mathtt{BVE}$ which outperforms all the other SOTA offline RL methods we have compared against for the discrete control on \texttt{bsuite}, RLU Atari and DeepMind Lab tasks. We note that our algorithm is particularly effective for low data regimes, where the overestimation issue discussed in this work is acute. 

Finally, we have also proposed two new offline RL datasets: (i) DeepMind Lab: large scale, partially observable environment, (ii) bsuite: small scale, low data regime. We believe these datasets poses unique characteristics  and that they will be of interest to the community. 
As future work, we plan to extend $\mathtt{R}$-$\mathtt{BVE}$ to continuous control and other real-world applications.


{
\fontsize{9.5}{11}\selectfont
\bibliography{refs}
\bibliographystyle{plainnat}
}

\clearpage
\appendix
\onecolumn
\title{Appendix}
\section{Q-learning can escape to infinity in the offline case}
\label{sec:Theorem}

\begin{theorem}
 Q-learning, using neural networks as a function approximator, can diverge in the offline RL setting given that the collected dataset does not include all possible state-actions pairs, even if it contains all transitions along optimal paths. Furthermore, the parameters (and hence the Q-values themselves) can diverge towards infinity under gradient descent dynamics. 
\end{theorem}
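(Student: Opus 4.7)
The plan is to construct an explicit counter-example: a small MDP together with a fixed dataset and a neural-network parameterization such that the gradient-descent dynamics on the standard Q-learning loss in Eq.~(\ref{eqn:dqn}) drive $\|\theta\|\to\infty$. The construction follows the spirit of Baird's counter-example, adapted to emphasize that the pathology is driven purely by out-of-distribution (OOD) actions rather than by missing state transitions.

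First, I would take a two-state MDP with two actions $\{a_1,a_2\}$ in which the behavior policy deterministically plays $a_1$, so that the dataset $\calD$ contains every transition along the optimal path $(s,a_1,r,s')$ but never any tuple with $a=a_2$. Second, I would choose a shared-parameter function approximator of the form $Q_\theta(s,a) = \phi(s,a)^\top \theta$ (the linear case is already sufficient to exhibit the pathology, and it is a degenerate one-layer network, so the result immediately extends to deep networks by embedding the features as the penultimate-layer activations with the final weights as the trainable $\theta$). The key is to pick features so that $\phi(s,a_2) = c\,\phi(s',a_1)$ with $c>1/\gamma$: then increasing the weight that fits the in-distribution action at $s$ necessarily raises the OOD Q-value $Q_\theta(s,a_2)$ by a larger factor at the bootstrap state.

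Third, I would freeze the target network (or treat the semi-gradient update exactly as in Eq.~(\ref{eqn:dqn})) and write the expected parameter update, under the dataset distribution, as a linear map $\theta_{t+1} = \theta_t - \lr A(\theta_t)\theta_t + b$, where $A$ has an entry coupling $Q_\theta(s,a_1)$ to $\max_{a'}Q_{\theta'}(s',a')$. Because the $\max$ at $s'$ selects the OOD action $a_2$ once its value has been raised slightly above that of $a_1$, and because $\phi(s,a_2)=c\,\phi(s',a_1)$, the effective iteration matrix has an eigenvalue of modulus larger than $1$ whenever $\gamma c > 1$. I would then pick concrete numerical values ($\gamma=0.9$, $c=2$, reward $r=1$, initial $\theta_0$ with $Q_\theta(s',a_2) \ge Q_\theta(s',a_1)$) and verify the spectral condition by a direct computation. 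Iterating the update then yields $\|\theta_t\| \to \infty$, and with it $|Q_{\theta_t}(s,a)| \to \infty$.

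The main obstacle is reconciling the two conditions required by the theorem statement: the dataset must cover \emph{all transitions along optimal paths}, yet must still exclude enough state-action pairs to trigger divergence. The subtlety is that the divergence mechanism needs the $\max$-operator to lock onto an OOD action at some bootstrap state, so I must choose the initialization and features so that $Q_{\theta_0}(s',a_2) > Q_{\theta_0}(s',a_1)$ and this ordering is preserved throughout training (an invariance argument). A secondary technical point is extending the argument from the expected-gradient dynamics to stochastic mini-batch updates; the standard approach is to note that the deterministic flow is the mean-field ODE and that, by Robbins-Siegmund type arguments, positive-eigenvalue instability of the ODE transfers to almost-sure divergence of the iterates once they leave a neighborhood of the (unstable) fixed point. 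I expect this last step to be the only place where care is needed; everything else reduces to a concrete finite computation.
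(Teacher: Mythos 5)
Your proposal is correct and follows the same overall strategy as the paper: exhibit an explicit offline counterexample in which the bootstrap target of a logged transition is a $\gamma c>1$ amplified linear copy of the very value being regressed, so that the semi-gradient recursion has multiplier $1+\lr(\gamma c-1)>1$ and the parameters escape to infinity. The paper's instance is a four-state chain with a tiny frozen-first-layer relu MLP in which $Q_\theta(s_2,a)=w$ and $Q_\theta(s_3,a)=\beta w$ for \emph{all} actions, yielding $\nabla_w=(1-\gamma\beta)w<0$ for $\beta>1/\gamma$; your linear-feature version with $c>1/\gamma$ is the same mechanism (note, though, that the identity you want is $\phi(s',a_2)=c\,\phi(s,a_1)$ --- the OOD action at the \emph{bootstrap} state must copy the regressed pair; as written your indices are swapped). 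Where the two executions genuinely differ is in how they dispose of your two stated obstacles. You propose an invariance argument to keep the $\max$ locked onto the OOD action; the paper avoids this entirely by making all actions at the bootstrap states share the same value, so the $\max$ is trivially $w$ (resp.\ $\beta w$) no matter which action attains it. You propose Robbins--Siegmund/ODE machinery to pass from expected to stochastic updates; the paper instead arranges the three logged transitions to touch disjoint parameters ($u_1$, $u_2$, $w$), so SGD and full-batch GD coincide sign-for-sign and the scalar recursion closes in one line (and sign-preserving preconditioners like Adam change nothing). The paper also adds a separate rewarding transition $(s_1,a_0,1,s_4)$ that is fit correctly while the rest diverges, which makes the ``even if all optimal-path transitions are logged'' clause bite harder than in your two-state version. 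Your route buys generality (a spectral criterion one could check for other feature maps, and an almost-sure statement for minibatch training); the paper's buys a fully elementary, self-contained computation. Neither of your flagged difficulties is a gap in substance, but both can be engineered away rather than argued through.
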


\begin{proof}
The proof relies on providing a particular instance where Q-learning diverges towards infinity. This is \emph{sufficient} to show that divergence can happen. Note that the remark does not make any statement of how likely is for this to happen, nor is providing sufficient conditions under which such divergence has to happen. 
 
 Let us consider a simple deterministic MDP depicted in the figure below (left).
 
 \begin{center}
 \includegraphics[trim=0 250 300 0,clip,width=\columnwidth]{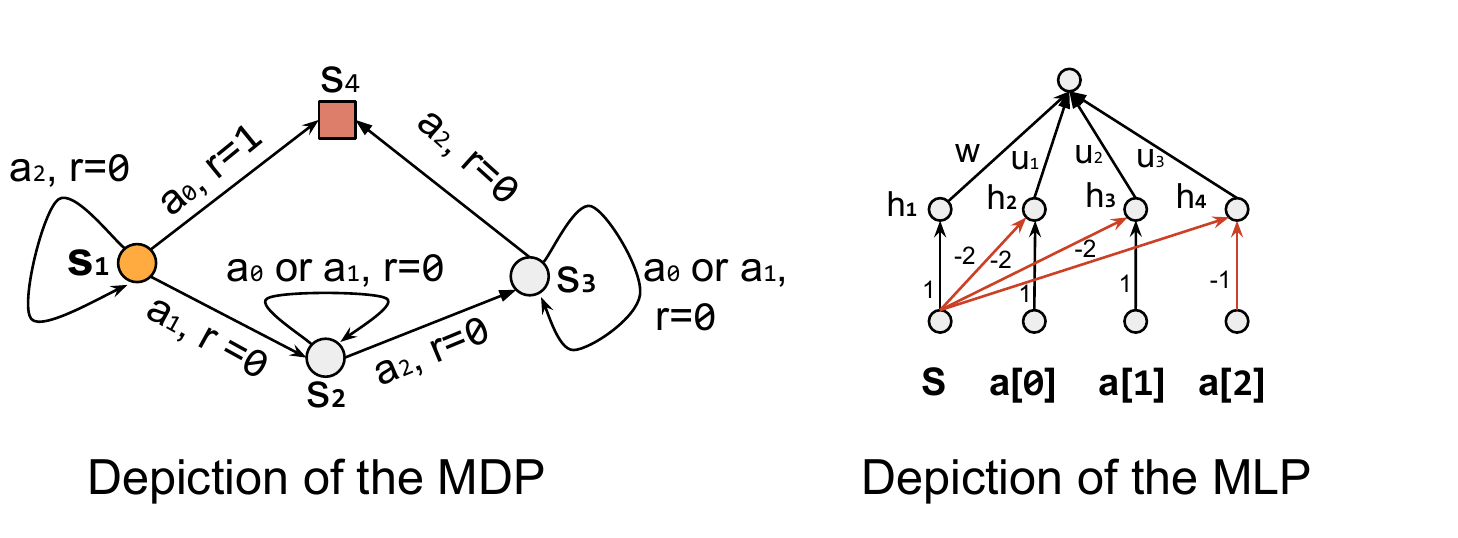}
 \end{center}

 $\mathcal{S}=\{s_1, s_2, s_3, s_4\}$ is the set of all states, where $S_1$ is deterministically the starting state and $S_4$ is the terminal state of the MDP. Let $\calA=\{a_0,a_1,a_2\}$ be the set of all possible actions. Let the reward function $r(s,a)$ be  $0$ for all action-state pair except $r(s_1, a_0)$ which is 1.
 Let the transition probabilities $\trans(s'|s,a)$ be deterministic as defined by the depicted arrows. I.e. for any state action pair only transitioning to one state has probability $1$, while the rest has probability $0$. For example, only   $\trans(s_4|s_1,a_0)=1$, while  $\trans(s_3|s_1,a_0)=0, \trans(s_2|s_1,a_0)=0. \trans(s_2|s_1,a_0)=0$. For $s_1, a_1$ only 
 $\trans(s_2|s_1,a_1)=1$ and so on and so forth. 
 
 \emph{First observation} is that the optimal behavior is to pick action $a_0$ (as it is the only rewarding transition in the entire MDP).

 The features describing each state are given by a single real number, where $s_1=0, s_2 =1, s_3=\beta$, with $\beta > \frac{1}{\gamma} > 0$, where $\gamma$ is the discount factor. Assume actions are provided to the neural network as one-hot vectors, i.e. $a_0=[0, 0, 1]^T, a_1=[0, 1, 0]^T, a_2=[1,0,0]^T$\footnote{one-hot representation is the typical representation for action in discrete spaces}, where we will refer to $a[i]$ as the $i$-th element of the vector that represents the action $a$. For example $a_0[2]=1$ and $a_0[0]=0$. 
 
 Let us consider the Q-function parametrized as a simple MLP (depicted in the figure above left). The MLP uses rectifier activations, and gets as input both the state and action, returning a single scalar value which is the Q-value for that particular state action combination. Rewriting the diagram in analytical form we have that for $s\in \mathcal{R}$ and $a\in\mathcal{R}^3$:
 
 \begin{equation}
     Q_\theta(s,a) = w\cdot relu(s) + u_1 relu(a[0]-2s) + u_2 relu(a[1]-2s) + u_3 relu(-2s -a[2])
 \end{equation} 
 
 \emph{A note on initialization.} The weights of the first layer are given as constants. The process would work if we leave them to be learnable as well, but the analysis would become considerably harder. The exact value used, $-2, 1, -1$, are not important. In principle we care for the negative weights connecting $s$ to $h_2, h_3, h_4$ be larger in magnitude than those from $a[i]$ to $h_i$, and we care for the weight between $a[2]$ and $h_2$ to be negative. They can be scaled arbitrarily small and do not need to be identical. 
 
 What we will rely in the rest of the analysis is that the preactivation of $h_2, h_3, h_4$ to be negative for state $s_2$ and $s_3$. 
 This will be in the zero region of the  rectifier, meaning no gradient will flow through those units. Since $s_3 > s_2 \geq 1$ and $a[i] \in \{0,1\}$, it is sufficient for the weight from $s$ to $h_2, h_3, h_4$ to be larger in magnitude than the weight from $a[i]$ to $h_2, h_3, h_4$. This ensures that for $s>1$, the Q-function is not a function of $u_i$ as $u_i$ will get multiplied by $0$.\footnote{The fact that no gradient gets propagated in the first layer is only important if we attempt to consider the case when the first layer weights are learnable.} Also we want the function to never depend on $u_3$ to simplify our analysis, which is easily achievable if the weight going from $a[2]$ to $h_4$ is negative. 
 
 Given the observations above, if we plug in the formula the different values of $s_i$ and $a_i$ we get that:
 
 \begin{equation}
 \begin{array}{lll}
     & Q_\theta(s_1, a_0) & = u_1 \\
     & Q_\theta(s,1, a_1) &= u_2 \\
     & Q_\theta(s_1, a_2) &= 0 \\
     \forall a \in \mathcal{A}, & Q_\theta(s_2, a) &= w  \\
     \forall a \in \mathcal{A}, & Q_\theta(s_3, a) &= \beta w 
     \end{array}
     \label{eq:individualQvals}
 \end{equation}
 
 Note that this implies that 
 \begin{equation}
 \begin{array}{l}
     \max_a Q_\theta(s_2,a) = w \\
     \max_a Q_\theta(s_3,a) = \beta w
\end{array}
     \label{eq:maxQtoy}
 \end{equation}

 Assume $w>0$. And let the dataset collected by the behavior policy to contain the following 3 transitions:

 $$\mathcal{D}=\{ (s_1, a_0, 1, s_4), (s_1, a_1, 0, s_2), (s_2, a_2, 0, s_3)\}$$
 
 We can now construct the Q-learning loss that we will use to learn the function $Q$ in the offline case which will be 
 
 \begin{equation}
  \begin{array}{ll}
     \mathcal{L} & = \sum_{(s,a,r,s')\in\mathcal{D}} \left(Q_\theta(s,a) - r - \gamma max_a Q_\theta'(s',a)\right)^2 \\
      & = (Q_\theta(s_1, s_0) - 1)^2 + (Q_\theta(s_1, a_1) - \gamma \max_a Q_\theta'(s_2,a))^2 + (Q_\theta(s_2, a_2) - \gamma \max_a Q_\theta'(s_3, a))^2 \\
      & = (u_1 - 1)^2 + (u_2 - \gamma w')^2 + (w - \gamma \beta w')^2
        \end{array}
        \label{eq:toyExampleUnfolded}
 \end{equation}
 
 Note that we relied on Equation~(\ref{eq:maxQtoy}) to evaluate the $\max$ operator and $\theta'$ is a copy of $\theta$, that is used for bootstrapping. This is the standard definition of Q-learning see Equation~(\ref{eqn:dqn}). In particular in this toy example $\theta'$ is numerically always identical to $\theta$ (in general it can be a trailing copy of $\theta$ from k steps back) and is used more to indicate that when we take a derivative of the loss with respect to $\theta$ we do not differentiate through $Q_\theta'$. From Equation~(\ref{eq:toyExampleUnfolded}) we notice that only the first transition in dataset contributes to the gradient of $u_1$, only the second transition contributes to the gradient of $u_2$ and only the third transition contributes to the gradient of $w$. 
 We can not evaluate the gradient with respect to $\theta$ of the loss $\mathcal{L}$ over the entire dataset:
 
 \begin{equation}
 \begin{array}{lll}
 \nabla_{u_1} &= u_1 - 1 &\\
 \nabla_{u_2} &= u_2 - (0 + \gamma w) & \\
 \nabla_{u_3} &= w - (0 + \gamma \beta w) &= (1 - \gamma \beta) w \\
 \nabla_{w} &= w - (0 + \gamma \beta w) &= (1 - \gamma \beta) w\\
 \end{array}
 \end{equation}
 
 Note that we assumed $w > 0$ and for simplicity we exploited that $w'=w$ numerically, to be able to better understand the dynamics of the update. Given that $\beta > \frac{1}{\gamma}$, $\nabla_{w}$ will always be negative as long as $w$ (and implicitly $w'$) stays positive. Given that $w_t = w_{t-1} - \alpha \nabla_w$ for some learning rate $\alpha >0$, the update creates a vicious loop that will increase the norm of $w$ at every iterations, such that $\lim_{t \to \infty} w_t = \infty$. Given that the gradient on $u_2$ tracks $w$, it means that the path that takes action $a_2$ in the initial state $s_1$ will have $+\infty$ as value. \emph{Note that all transitions along the optimal path of this deterministic MDP are part of the dataset.} 
 
 Also that given our example, the same will happen if we rely on SGD rather than batch GD (as the different examples affect different parameters of the model independently and there is no effect from averaging). Preconditioning the updates (as for e.g. is done by Adam or rmsprop) will also not change the result as they will not affect the sign of the gradient (the preconditioning matrix needs to be positive definite). Neither momentum will not affect the divergence of learning, as it will not affect the sign of the update. 
 
 This means that the provided MDP will diverge towards infinity under the updates on most commonly used gradient based algorithms. 
 
\end{proof}

\section{The surprising efficiency of 1-step of policy improvement}\label{sec:optimal_policy_conditions}
In this section, we show examples where 1-step of policy improvement is surprisingly effective.
We first start by going over some theoretical results.

\begin{lemma}
Assume a Markov Decision Process that satisfies the following conditions: (1) the transition distribution $P$ is deterministic, (2) any trajectory has a finite length (3) we are interested in the episodic return, i.e., the discount factor $\gamma=1$, (4) the reward function $r(s, a)=0$ for any non-terminating state and $r(s, a) \in \left\{L, H\right\}$ for any terminating state with constants $L < H$. Denote by $\calS$ the set of states from which there exists a trajectory with a reward of $H$. If a behavior policy $\pi_{\calB}$ can generate a trajectory with a reward of $H$ with a non-zero probability for any initial state $s\in \calS$, then one step of policy improvement on the exact behavior value function will give an optimal policy $\pi^*$.
\label{lemma:lemma_mdp_det}
\end{lemma}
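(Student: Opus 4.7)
The plan is to prove this by characterizing the optimal and behavior value functions explicitly, and then showing that the argmax over $Q^{\pi_{\calB}}$ always selects an action that continues along a trajectory reaching reward $H$. First I would observe that under the stated assumptions, every trajectory terminates with either reward $L$ or $H$ and accumulates no other reward, so for any policy $\pi$ and state $s$, $V^{\pi}(s) = p_{\pi}(s)\,H + (1 - p_{\pi}(s))\,L$ where $p_{\pi}(s)$ is the probability that $\pi$ reaches a terminating transition with reward $H$ starting from $s$. This gives the clean characterization $V^{*}(s) = H$ for $s \in \calS$ and $V^{*}(s) = L$ otherwise; combined with the assumption on $\pi_{\calB}$, it also yields $V^{\pi_{\calB}}(s) > L$ for every $s \in \calS$ and $V^{\pi_{\calB}}(s) = L$ for every $s \notin \calS$.

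Next I would translate this into a statement about $Q^{\pi_{\calB}}(s, a)$. Because transitions are deterministic and intermediate rewards vanish, $Q^{\pi_{\calB}}(s, a) = r(s,a)$ on terminating transitions and $Q^{\pi_{\calB}}(s, a) = V^{\pi_{\calB}}(P(s,a))$ otherwise. Fix any non-terminal $s \in \calS$, and let $\pi'(s) = \argmax_{a} Q^{\pi_{\calB}}(s, a)$. By definition of $\calS$ there exists an action $a^{*}$ that either terminates with reward $H$ or leads deterministically into $\calS$; in both cases $Q^{\pi_{\calB}}(s, a^{*}) > L$. Hence $Q^{\pi_{\calB}}(s, \pi'(s)) > L$ as well, which rules out terminating with $L$ (that would give $Q = L$) and, for any non-terminating choice, forces $V^{\pi_{\calB}}(P(s, \pi'(s))) > L$, i.e. $P(s, \pi'(s)) \in \calS$. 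In other words, $\pi'$ either terminates immediately with $H$ or stays inside $\calS$.

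I would then close the argument by induction on a finite depth measure $d(s)$ (well defined by assumption (2)), e.g.\ the length of the longest feasible trajectory from $s$. For the base case $d(s)$ minimal within $\calS$, the action $\pi'(s)$ must be the terminating-$H$ action, so $V^{\pi'}(s) = H$. For the inductive step, the previous paragraph shows that either $\pi'(s)$ terminates with $H$, giving $V^{\pi'}(s) = H$ directly, or it moves to a state $s' \in \calS$ with $d(s') < d(s)$, whence $V^{\pi'}(s) = 0 + V^{\pi'}(s') = H$ by the inductive hypothesis. For $s \notin \calS$, every trajectory yields $L$, so $V^{\pi'}(s) = L = V^{*}(s)$ trivially. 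Combining the two regimes gives $V^{\pi'} = V^{*}$, so $\pi'$ is optimal.

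The main obstacle I expect is not a deep one but a bookkeeping one: carefully distinguishing terminating transitions (where $Q^{\pi_{\calB}} = r \in \{L, H\}$) from non-terminating transitions (where $Q^{\pi_{\calB}} = V^{\pi_{\calB}}(\text{next state})$), and being precise that the \emph{strict} inequality $V^{\pi_{\calB}}(s') > L$ for $s' \in \calS$ (which uses the non-zero-probability assumption on $\pi_{\calB}$) is exactly what lets the argmax distinguish continuations that reach $H$ from those that do not. Without that strictness, ties between good and bad successor states could be broken adversarially and the argument would fail; with it, the induction goes through cleanly.
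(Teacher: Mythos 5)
Your proposal is correct and follows essentially the same route as the paper's proof: both hinge on the observation that $Q^{\pi_{\calB}}(s,a) > L$ exactly when some trajectory from $(s,a)$ reaches reward $H$, so the greedy action either terminates with $H$ or stays inside $\calS$, and finiteness of trajectories then forces termination with $H$. The only cosmetic difference is that the paper runs the induction forward along the trajectory generated by the improved policy, while you induct backward on a depth measure; your version is slightly more careful about separating terminating from non-terminating transitions, but the substance is identical.
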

\begin{proof}
Let us note that, our proof assumes a tabular case, and by our definition any trajectory receives an episodic return of either $L$ or $H$. A policy $\pi$ is optimal in this environment if and only if it receives a reward of $H$ with a probability of 1 from any state $s \in \calS$. For a behavior policy satisfying the premise, its value function is as follows,
\begin{align}
    Q(s, a) &= \mathbb{E}_{\tau \sim \pi_{\calB}} \left[\sum_{t=0}^{\infty} \gamma^t r(s_t, a_t)\right] = \Big(p(\tau_H|s, a) H + (1 - p(\tau_H|s, a))L\Big) \in \left[L, H\right], \nonumber\\
    V(s) &= \mathbb{E}_{a \sim \pi_{\calB}(s)} [Q(s, a)] \left\{
    \begin{array}{ll}
    > L & \mbox{if $s\in \calS$;} \\
    =L & \mbox{otherwise.}
    \end{array}\right.
\end{align}
where $p(\tau_H|s, a)$ denotes the probability of trajectories sampled from $(s, a)$ with a reward of $H$.
The policy after one step of policy improvement is defined as follows with a tie being broken arbitrarily.
\begin{equation}
    \pi(s) = \argmax_a Q(s, a)
\end{equation}
From any state $s_0\in \calS$, let $a_0 \sim \pi(s_0)$ be any sample from $\pi$. Because $V(s) > L$, we know $Q(s_0, a_0) > L$, which infers there exists a trajectory starting from $(s_0, a_0)$ with a reward of $H$. Let $s_1$ be the next state following action $a_0$ in the deterministic environment. Consequently we know $s_1 \in \calS$. We can apply $\pi$ repeatedly to sample a trajectory $s_0, a_0, s_1, a_1, \dots$. Because $s_t \in \calS, \forall t \geq 0$ by induction and any trajectory is finite according to the premise, any sampled trajectory from $s_0$ will eventually reach a terminating state with a reward of $H$, and therefore, the policy $\pi$ receives an episodic return of $H$ with a probability of 1 from any state $s_0 \in \calS$ and is therefore an optimal policy.
\end{proof}
\begin{remark}
The conditions of the MDP can be satisfied by a broad range of reinforcement learning problems, such as a goal reaching task that receives a constant positive reward only if it reaches the goal within a time limit, or an autonomous driving task that will receive a constant penalty if it crashes in a given time period.
\end{remark}
\begin{remark}
The assumption on the behavior policy assumes a sufficient exploration in the dataset so that a good trajectory exists for any initial state, but it doesn't impose any restriction on the average performance.
\end{remark}

\begin{figure}[htp!]
    \centering
    \includegraphics[width=0.8\columnwidth]{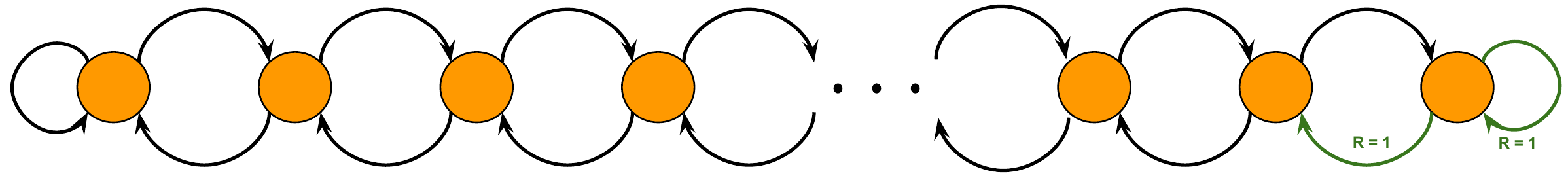}
    \caption{Chain MDP. The initial state is the leftmost state. There are only two rewarding transitions both of which come out of the right most state. All other transitions have a reward of 0. The behavioral policy is uniformly random between the two actions in each state.
    \label{fig:chain}}
\end{figure}

\begin{lemma}
In the chain MDP (Figure ~\ref{fig:chain}) of at least 2 states with discount $\gamma \in (0, 1)$, under uniformly random behavior policy, one step of policy improvement (given the true value function) would lead to an optimal policy.
\end{lemma}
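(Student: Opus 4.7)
The plan is to show that under the uniform random behavior policy $\pi_{\calB}$ the true value $V^{\pi_{\calB}}(i)$ is strictly increasing in the state index $i$, and to use this monotonicity to deduce that the one-step greedy policy always takes the ``right'' action, which is plainly optimal in this MDP (it reaches and maximally exploits the unique rewarding state in the fewest possible steps).

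I would label the states $1,2,\ldots,N$ from left to right with $N\geq 2$, write $V_i := V^{\pi_{\calB}}(i)$, and note that only the two transitions out of state $N$ carry reward. The Bellman equations under the uniform policy then give, for interior states $1 < i < N$,
\begin{equation*}
V_i \;=\; \tfrac{\gamma}{2}\bigl(V_{i-1} + V_{i+1}\bigr),
\end{equation*}
while the reflecting boundary at state $1$ yields $V_1 = \tfrac{\gamma}{2}(V_1 + V_2)$, i.e.\ $V_1 = \tfrac{\gamma}{2-\gamma} V_2$. Since $\gamma \in (0,1)$ the coefficient lies strictly in $(0,1)$, so $V_1 < V_2$, using $V_2 > 0$, which follows from the positive probability of the finite-chain random walk reaching $N$.

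Next I would induct on $i$ to establish $V_i < V_{i+1}$ for every $1\leq i < N$. Rearranging the interior recurrence gives $V_{i+1} = \tfrac{2}{\gamma}V_i - V_{i-1}$, and the inductive hypothesis $V_{i-1} < V_i$ yields
\begin{equation*}
V_{i+1} - V_i \;=\; \tfrac{2-\gamma}{\gamma}\,V_i - V_{i-1} \;>\; V_i - V_{i-1} \;>\; 0,
\end{equation*}
where the first inequality uses $\tfrac{2-\gamma}{\gamma} > 1$ together with $V_i > 0$. Hence $V_1 < V_2 < \cdots < V_N$. With strict monotonicity in hand, for any non-terminal state $i<N$ we have $Q^{\pi_{\calB}}(i,\text{right}) = \gamma V_{i+1}$ and $Q^{\pi_{\calB}}(i,\text{left}) = \gamma V_{\max(i-1,1)}$, so $Q^{\pi_{\calB}}(i,\text{right}) > Q^{\pi_{\calB}}(i,\text{left})$ and the greedy action is ``right''. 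A brief case-check at state $N$ (using the specific transition structure of the two rewarding edges) shows the greedy action there is also optimal, so the one-step improved policy is the deterministic always-right policy, which is optimal.

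The main obstacle I anticipate is the treatment of the rightmost state $N$: the statement lumps together ``two rewarding transitions'' without fully specifying whether ``right'' self-loops, terminates, or bounces back, so optimality of the greedy action at $N$ must be verified against the precise definition (in each natural variant, however, strict monotonicity $V_{N-1} < V_N$ again forces ``right'' to win, or both actions tie and are simultaneously optimal). A minor subtlety is guaranteeing positivity of $V_i$ throughout the induction, which is immediate from the fact that the uniform random walk on a finite chain hits $N$ with positive probability in finite time, producing a strictly positive discounted return from every state.
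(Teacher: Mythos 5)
Your proof is correct and follows essentially the same route as the paper's: both derive the tridiagonal Bellman system for the uniform policy, establish $V_1 < V_2$ from the reflecting boundary equation, induct via the interior recurrence $V_i = \tfrac{\gamma}{2}(V_{i-1}+V_{i+1})$ to get strict monotonicity of the values, and conclude that the greedy policy always moves right. Your handling of the positivity of $V_i$ (via the positive probability of the random walk reaching the rewarding state) is slightly more careful than the paper's one-line assertion, but the argument is the same.
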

\begin{proof}
We name the states in this class of environment numerically. That is the left most state is state $1$ and the last state $n$.
Denote $V_i$ to be the value function of the uniform random policy at state $i$.
Let $V = [V_0, V_1, \cdots, V_n]$. It is easy to show that $(I - \gamma P)V = R$ where $P$ is the state transition matrix $R = [0, \cdots, 1]^T$.
We know $$
(I - \gamma P) = \left( \begin{array}{cccccc}
1-\frac{1}{2}\gamma & -\frac{1}{2}\gamma &   & & &0 \\
-\frac{1}{2}\gamma & 1 & -\frac{1}{2}\gamma  & & &\\
 &-\frac{1}{2}\gamma & 1 & -\frac{1}{2}\gamma & & \\
&& \ddots & \ddots & \ddots & \\
&&&-\frac{1}{2}\gamma & 1 & -\frac{1}{2}\gamma \\
0&& & & -\frac{1}{2}\gamma & 1-\frac{1}{2}\gamma \\
\end{array} 
\right).
$$

Since all rewards are positive, we know $V_i > 0 \; \forall \, i$.
Since $(1-\frac{1}{2}\gamma)V_1-\frac{1}{2}\gamma V_2 = 0$, $V_1 > 0 \; V_2 > 0$, and $\gamma \in (0, 1)$, we know $V_2 > V_1$.
Assume that $V_k > V_{k-1}$ where $k < n$ , then $V_{k+1} > V_{k}$ since $\frac{1}{2}\gamma(V_{k+1} + V_{k-1}) = V_{k}$.
Therefore, by induction, we show that the values are monotonically increasing. 

In every state, going left along the chain is therefore of lower $Q$ value compared to going right (if going right in the rightmost states entails staying). 
We know the optimal policy in this environment is to go right. Therefore 1 step of policy improvement provides the optimal policy.
\end{proof}

\begin{remark}
In a 2-d grid world, we can show an analogous result. We, however, omit this as the proof follows a similar structure albeit more complicated.
\end{remark}

One-step of policy improvement, despite the examples we have shown in this section, does not always lead to an optimal policy. 
We argue, however, it is surprisingly good in many scenarios. In this section, we present such an example in a 2-d grid world (see Figure \ref{fig:grid_world}). In this example, one-step of policy improvement from a random policy is not sufficient to derive an optimal policy. The resulting policy, however, is very close in performance to the optimal policy when measured by the value of the initial state.
It is reasonable to assume that many environments share the properties for which BVE is an effective technique.

\begin{figure}[htp!]
    \centering
    \includegraphics[width=0.7\columnwidth]{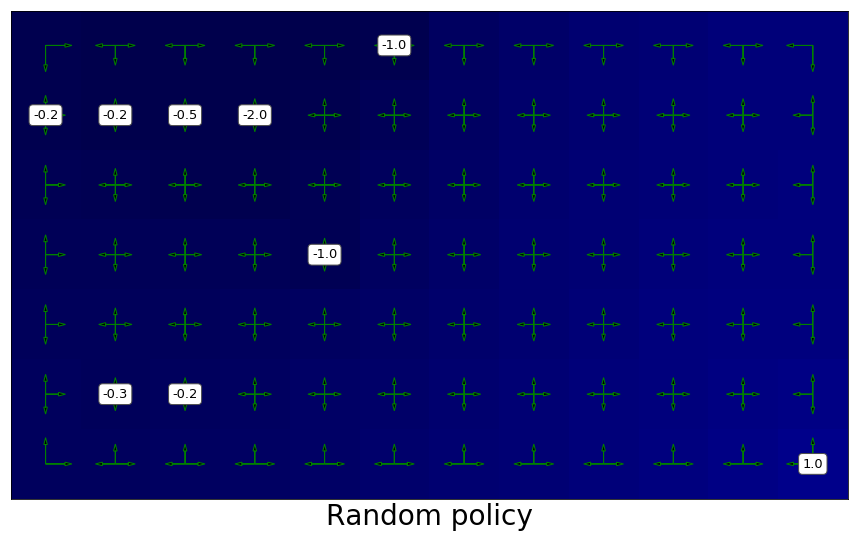}
    \includegraphics[width=0.7\columnwidth]{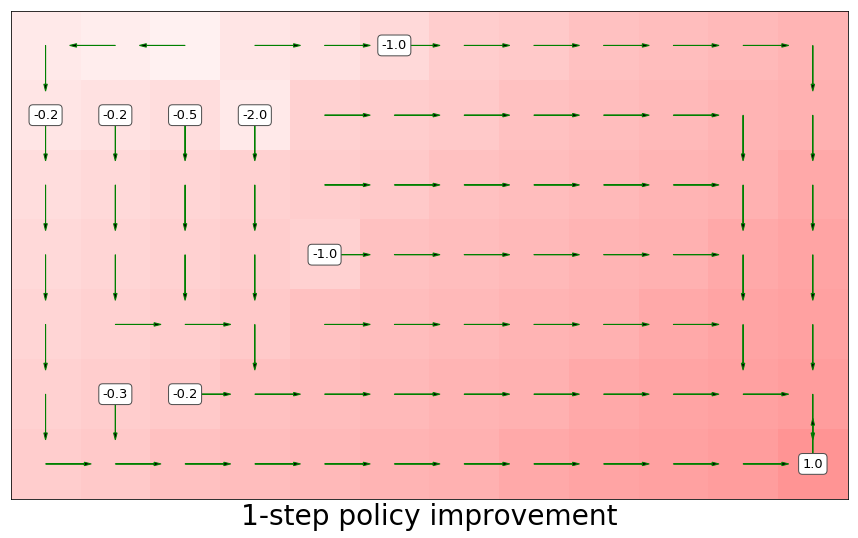}
    \includegraphics[width=0.7\columnwidth]{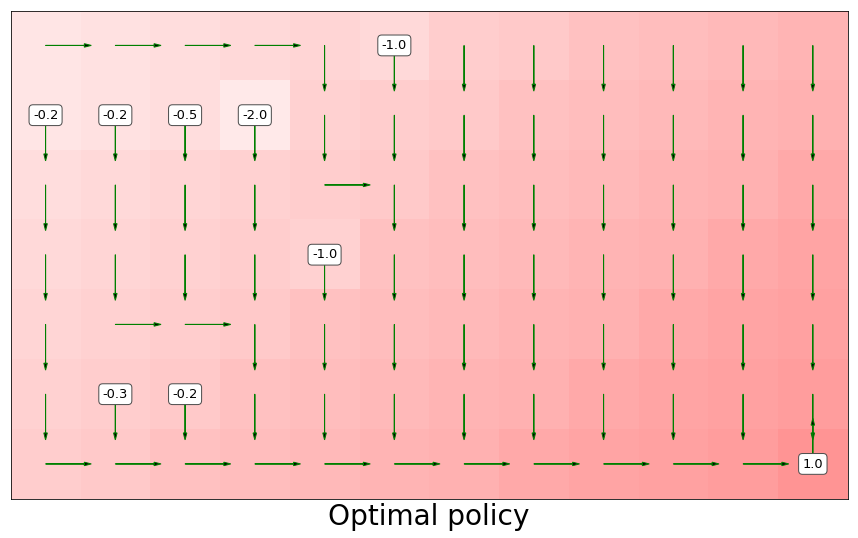}
    \caption{Grid world example. Here, the numbers in the cells represent rewards. (The reward is 0 in the absence of numbers). The green arrows represent the policy. The color of the cell shows the GT value function. [\textsc{Top}] An uniformly random policy achieves value of $-9.518$ at the initial state (upper left state). [\textsc{Middle}] With one step of policy improvement over the random polices, we achieve $42.160$ in value. [\textsc{Bottom}] The optimal policy in this environment obtains $42.359$ in value for the initial state, only $0.2$ higher than the value achieved using only 1 step of policy improvement.
    \label{fig:grid_world}}
\end{figure}

\section{Additional Results and Ablations}
\label{sec:additional_results}

\subsection{On the Effects of Regularization}
\label{sec:effect_of_regularization}
In this section, we study the effect of the regularization on the action gap and the over-estimation error. In Figure \ref{fig:action_gap_regularization}, we show that increasing the regularization co-efficient for the ranking regularization increases the action gap across the Atari online policy selection games which can result to lower estimation error and better optimization.
\begin{figure}[htp!]
    \centering
    \includegraphics[width=0.85\columnwidth]{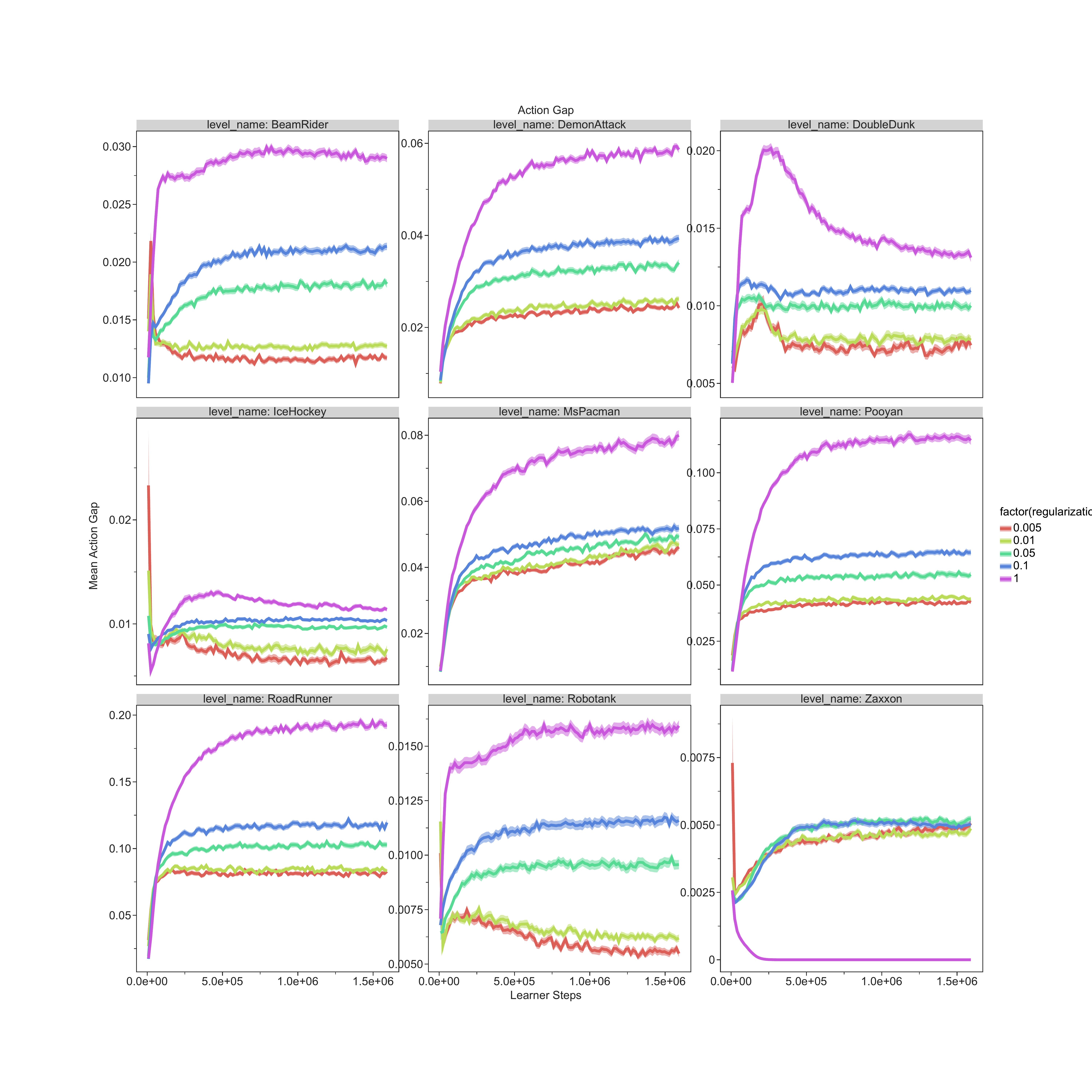}
    \caption{{\bf The effect of increasing the ranking regularization on the action gap.} We can see that increasing the regularization significantly increases the action-gap which potentially can help further stabilize the learning.}
    \label{fig:action_gap_regularization}
\end{figure}

In Figure \ref{fig:over-estimation_regularization}, we investigate the effect of increasing the regularization on the over-estimation of the Q-network when evaluated in the environment. We visualize the mean over-estimation across the online policy selection games for Atari.
\begin{figure}[htp!]
    \centering
    \includegraphics[width=0.85\columnwidth]{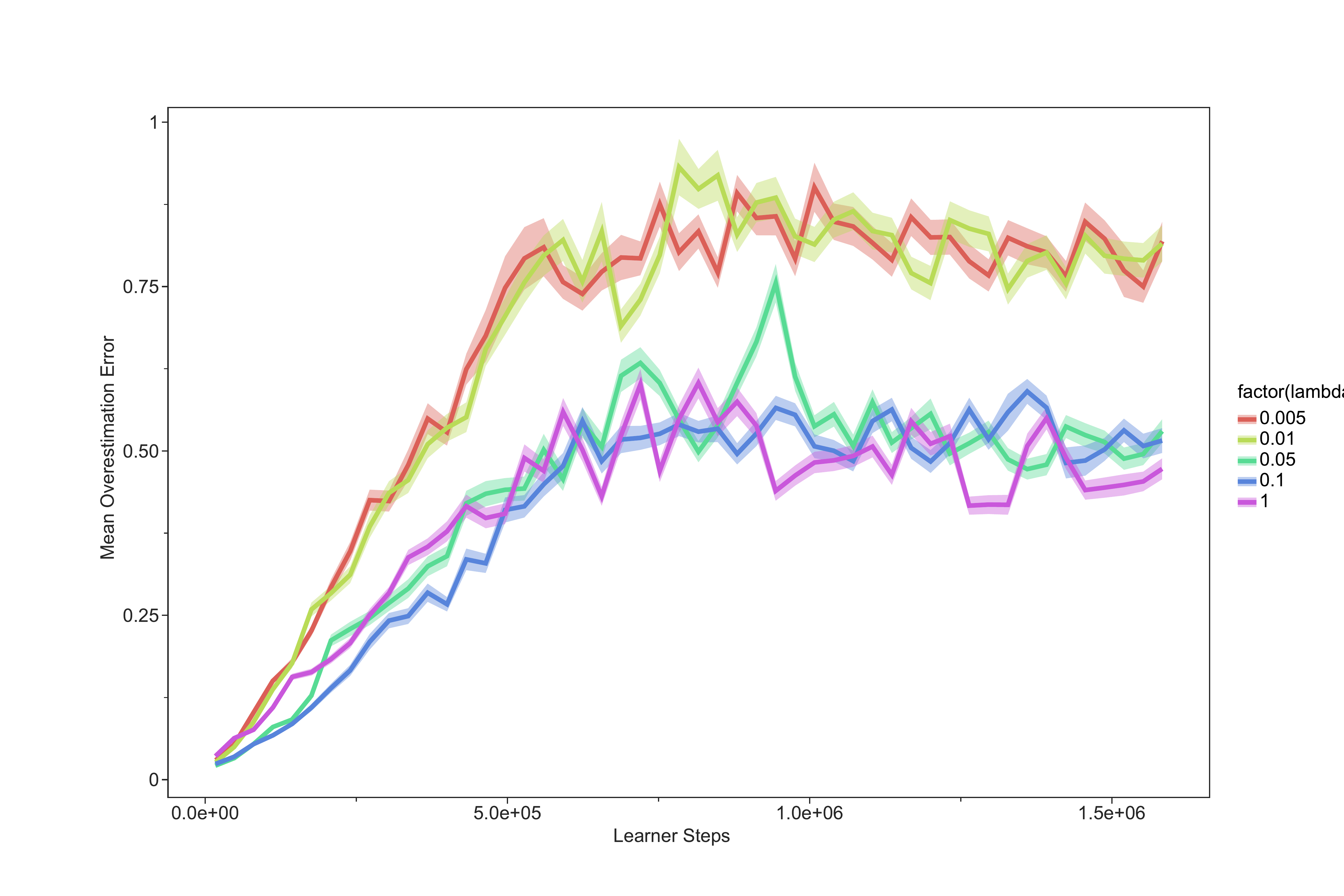}
    \caption{{\bf The Effect of increasing the ranking regularization on the over-estimation.} Here, we see that increasing the ranking regularization reduces the mean over-estimation error across Atari online policy selection games.}
    \label{fig:over-estimation_regularization}
\end{figure}

\subsection{Atari: Robustness to Data}
\label{sec:apx:atari_robustness}
The robustness of the reward distribution in the dataset is an important feature required to deploy offline RL algorithms in the real-world. We would like to understand the robustness of behavior value estimation in the offline RL setting. Thus, we first investigate the robustness of $\mathtt{B}$ in contrast to Q-learning with respect to the datasets' size and the reward distribution. In Figure~\ref{fig:SARSA_vs_DQN_reward_filtering}, we split out the dataset into two smaller datasets: i) transitions coming from only highly rewarding ii) transitions from only poorly performing episodes. We show that $\mathtt{B}$ outperforms Q-learning in both settings.

\begin{figure}[htp!]
    \centering
    \includegraphics[width=\linewidth]{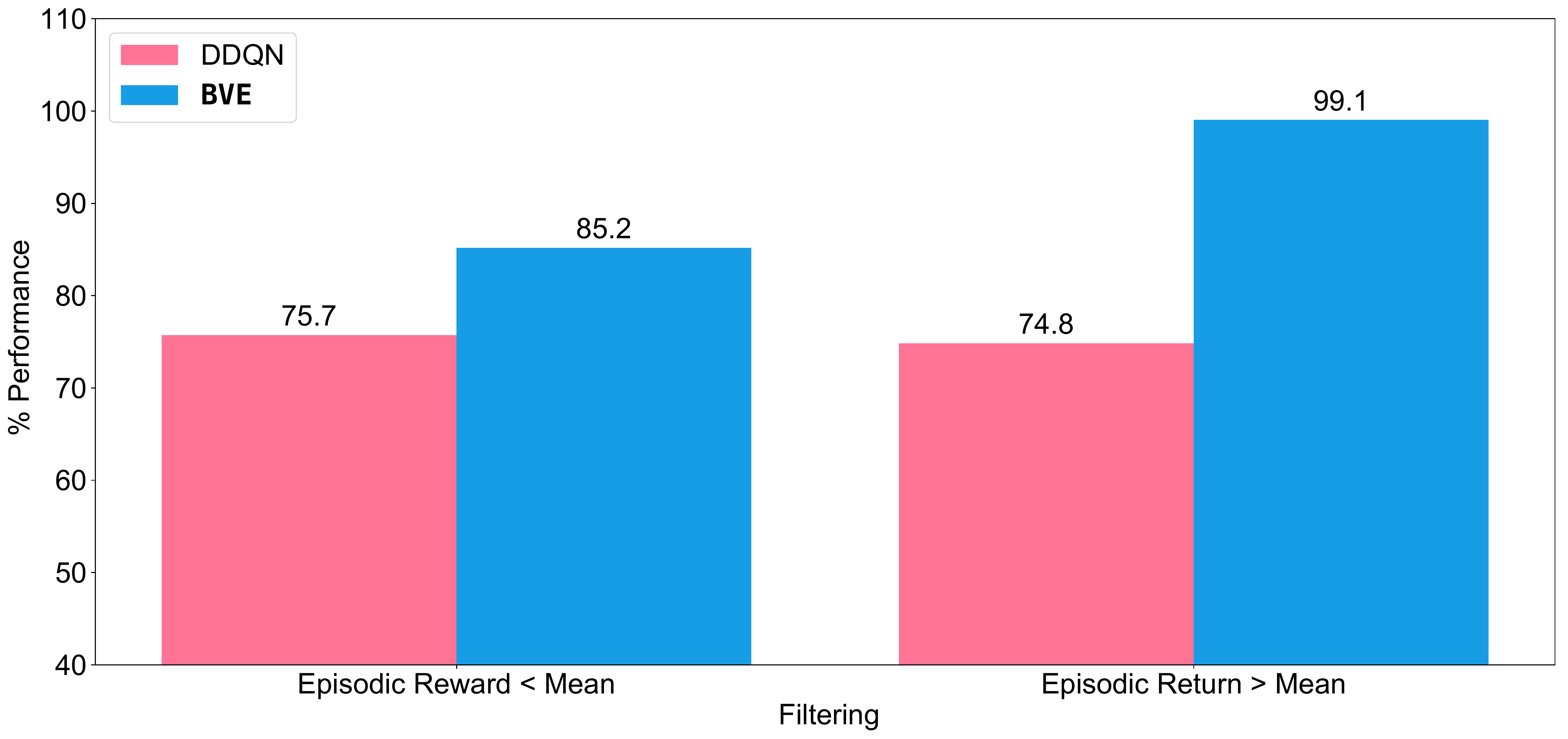}
    \caption{{\bf Robustness Experiments:} We compare DQN and $\mathtt{B}$ in terms of their robustness to the reward distribution on Atari online policy selection games. We split the datasets in two bins: the dataset that only contains transitions that are coming from episodes that have episodic return less than the mean episodic return in the dataset ("Episodic Reward $<$ Mean"), transitions coming from episodes with return higher than the mean return in the dataset ("Episodic Reward $>$ Mean"). $\mathtt{B}$ performs better than DQN in both cases.}
    \label{fig:SARSA_vs_DQN_reward_filtering}
\end{figure}

\subsection{Online Policy Selection Games Results}
In Figure~\ref{fig:atari_ops_returns}, we compare the performance of DDQN, $\mathtt{BVE}$, $\mathtt{R}$-$\mathtt{BVE}$, and $\mathtt{R}$-$\mathtt{DQN}$ with respect to the rewards they achieve over the course of training on Atari online policy selection games.

\begin{figure}[H]
    \centering
    \includegraphics[width=0.85\columnwidth]{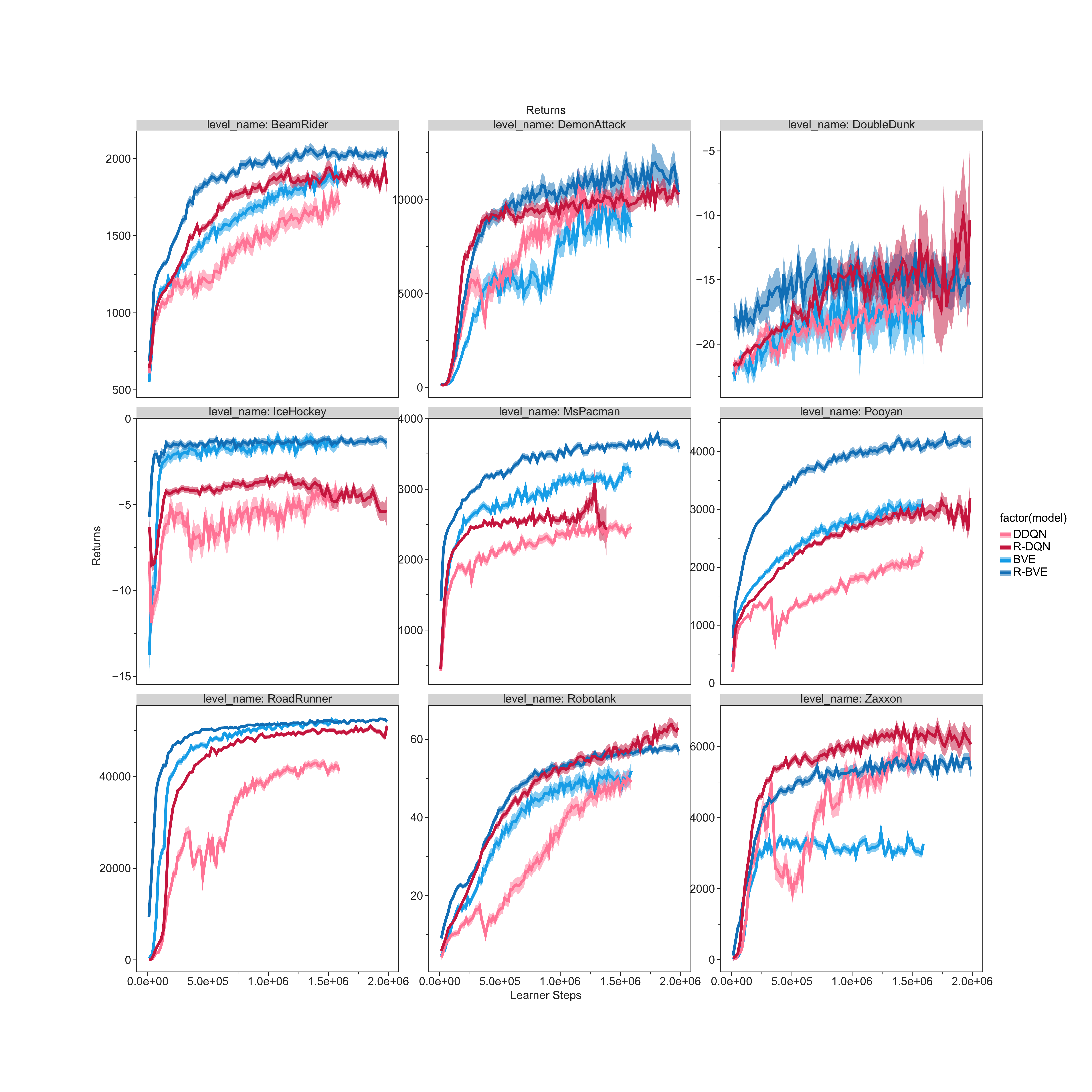}
    \caption{The Raw Returns obtained by each baseline on Atari online Policy Selection Games.}
    \label{fig:atari_ops_returns}
\end{figure}

\subsection{Over-estimation on Online Policy Selection Games}

In Figure \ref{fig:value_err_atari} and \ref{fig:sqr_val_err_atari}, we report the value error of $\mathtt{BVE}$, $\mathtt{R}$-$\mathtt{BVE}$, $\mathtt{R}$-$\mathtt{DQN}$ and DDQN's value error and overestimation error respectively. With respect to both metrics we observed that DDQN has the highest value error. Using $\mathtt{BVE}$ significantly alleviates the problem with the value and overestimation errors, but ranking regularization further reduces the problem both for DDQN and $\mathtt{BVE}$.

\begin{figure}[H]
    \centering
    \includegraphics[width=0.85\columnwidth]{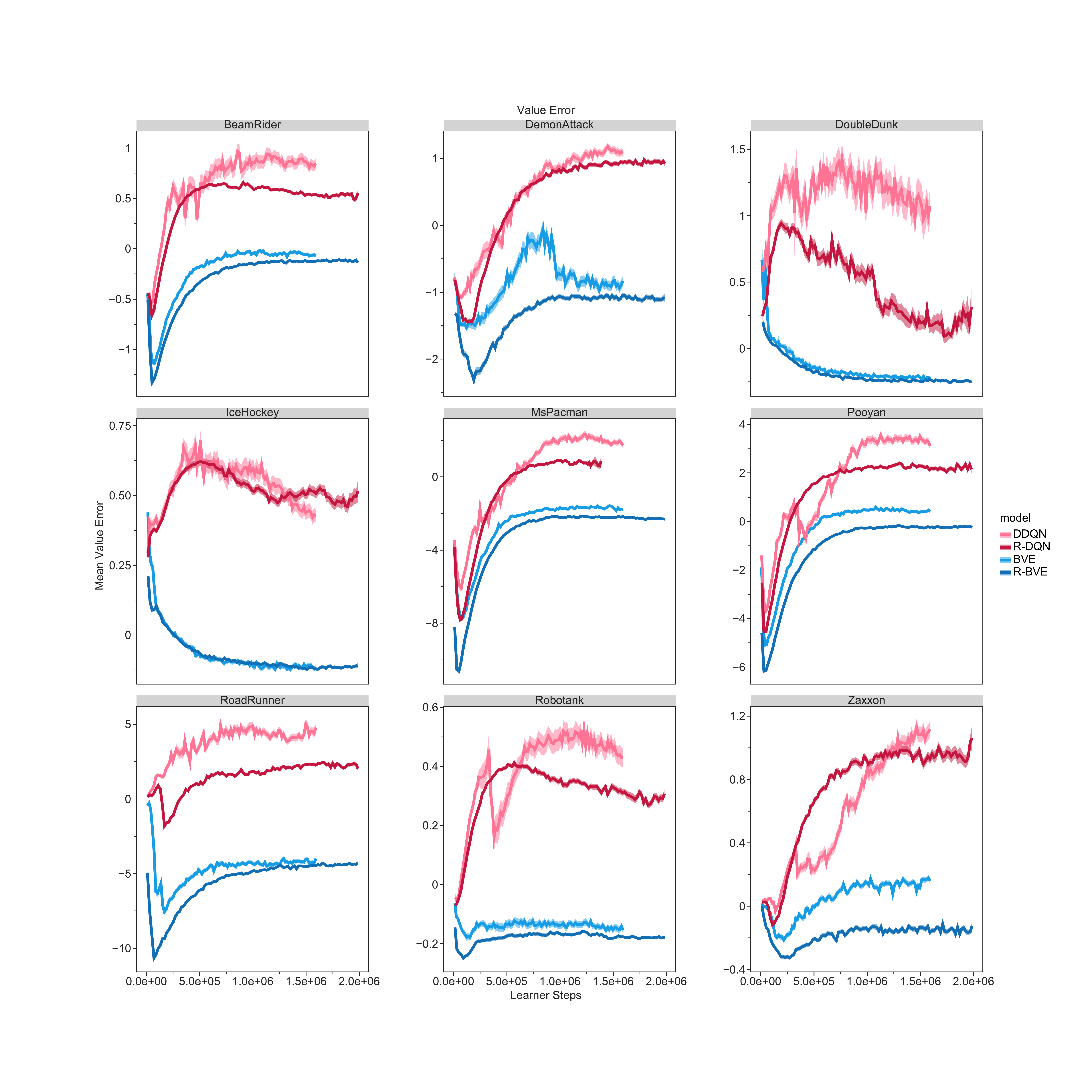}
    \caption{The value error computed in the environment by evaluating the agent and computed with respect to the ground truth discounted returns. The negative values indicate under-estimation and positive values are for over-estimation.}
    \label{fig:value_err_atari}
\end{figure}

\begin{figure}[H]
    \centering
    \includegraphics[width=0.85\columnwidth]{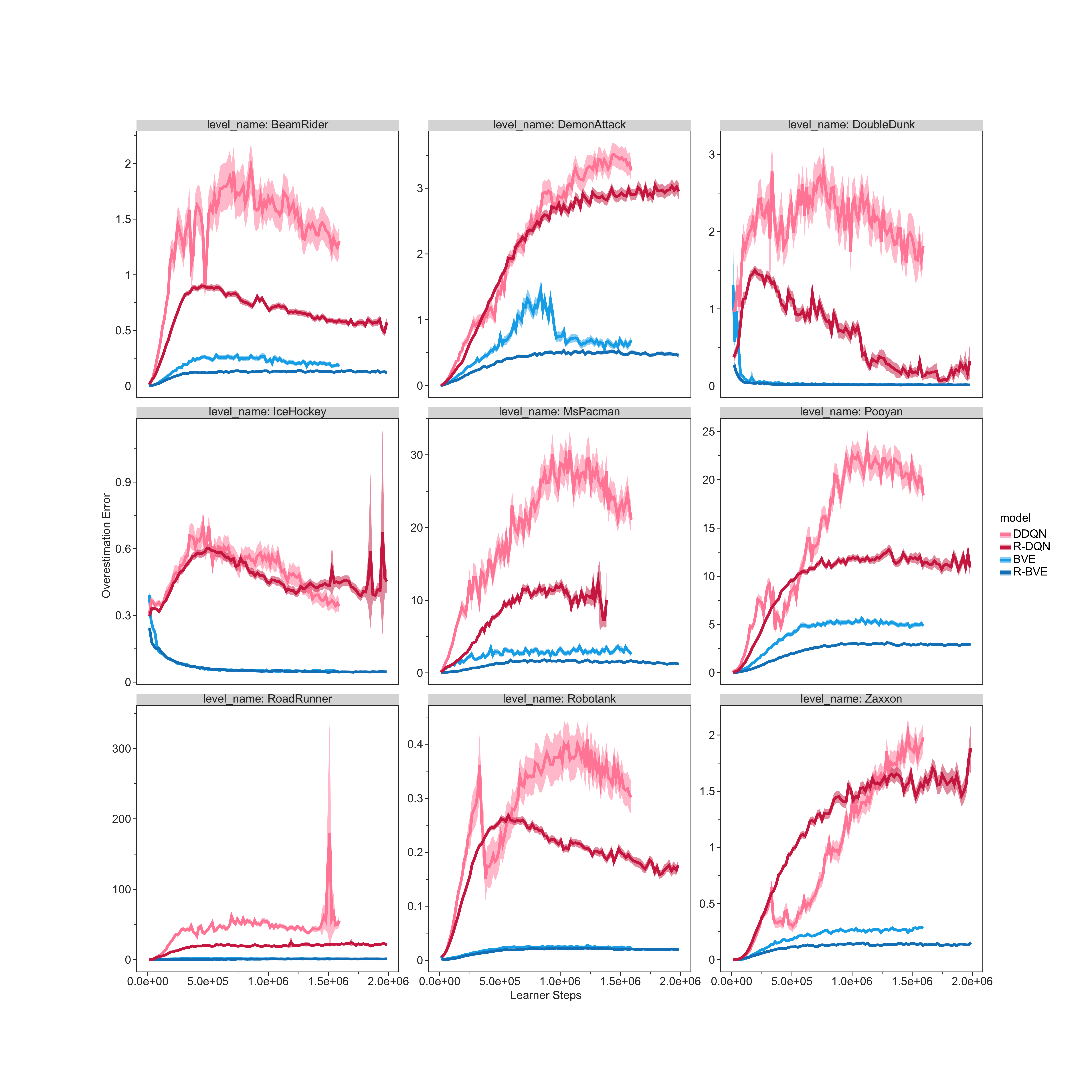}
    \caption{{\bf Overestimation error:} Here we show the over-estimation error computed by evaluating the agent in the environment. We compute the over-estimation error as we discussed in the main paper. Here we can see the clear trend for the over-estimation error across different Atari online policy selection games as follows: DDQN > $\mathtt{R}$-$\mathtt{DQN}$ > $\mathtt{BVE}$ > $\mathtt{R}$-$\mathtt{BVE}$. We see that $\mathtt{BVE}$ alleviates the over-estimation problem and ranking regularization further helps as well.}
    \label{fig:sqr_val_err_atari}
\end{figure}

\section{Details of Datasets}
\label{sec:datasets}

\subsection{BSuite Dataset}
\label{sec:bsuite_dataset}
BSuite \citep{osband2019behaviour} data was collected by training DQN agents \citep{mnih2015humanlevel} with the default setting in Acme \citep{hoffman2020acme} from scratch in each of the three tasks: cartpole, catch, and mountain\_car. We convert the originally deterministic environments into stochastic ones by randomly replacing the agent action with a uniformly sampled action with a probability of $\epsilon \in \{0, 0.1, 0.2, 0.3, 0.4, 0.5\}$ (ie. $\epsilon = 0$ corresponds to the original environment). We train agents (separately for each randomness level and 5 seeds, i.e. 25 agents per game) for 1000, 2000, 500 episodes in cartpole, catch and mountain\_car respectively. The number of episodes is chosen so that agents in all levels can reach their best performance. We record all the experience generated through the training process. Then to reduce the coverage of the datasets and make them more challenging we only used 10\% of the data by subsampling it. More details of the dataset are provided in Table~\ref{tab:bsuite_data}. The results presented in the paper are averaged over the 5 random seeds.

\begin{table}[tbhp!]
    \centering
    \begin{tabular}{c|ccc}
        Environments & Number of episodes & Number of transitions & Average episode length \\
        \hline
        cartpole ($\epsilon=0.0$) & 1000 & 710K & 710 \\
        cartpole ($\epsilon=0.1$) & 1000 & 773K & 773 \\
        cartpole ($\epsilon=0.2$) & 1000 & 649K & 649 \\
        cartpole ($\epsilon=0.3$) & 1000 & 607K & 607 \\
        cartpole ($\epsilon=0.4$) & 1000 & 672K & 672 \\
        cartpole ($\epsilon=0.5$) & 1000 & 643K & 643 \\
        \hline
        catch ($\epsilon=0.0$) & 200 & 1.8K & 9 \\
        catch ($\epsilon=0.1$) & 200 & 1.8K & 9 \\
        catch ($\epsilon=0.2$) & 200 & 1.8K & 9 \\
        catch ($\epsilon=0.3$) & 200 & 1.8K & 9 \\
        catch ($\epsilon=0.4$) & 200 & 1.8K & 9 \\
        catch ($\epsilon=0.5$) & 200 & 1.8K & 9 \\
        \hline
        mountain\_car ($\epsilon=0.0$) & 50 & 10K & 205 \\
        mountain\_car ($\epsilon=0.1$) & 50 & 10K & 210 \\
        mountain\_car ($\epsilon=0.2$) & 50 & 22K & 447 \\
        mountain\_car ($\epsilon=0.3$) & 50 & 13K & 277 \\
        mountain\_car ($\epsilon=0.4$) & 50 & 12K & 250 \\
        mountain\_car ($\epsilon=0.5$) & 50 & 24K & 494 \\
        \hline
    \end{tabular}
    \caption{BSuite dataset details.}
    \label{tab:bsuite_data}
\end{table}

\subsection{DeepMind Lab Dataset}
\label{sec:dmlab_dataset}
DeepMind Lab \citep{beattie2016deepmind} data was collected by training distributed R2D2 \citep{kapturowski2018recurrent} agents from scratch on individual tasks. First, we tuned the hyperparameters of a distributed version of the Acme \citep{hoffman2020acme} R2D2 agent independently for every task to achieve fast learning in terms of actor steps. Then, we recorded the experience across all actors during entire training runs a few times for every task. Training was stopped after there was no further progress in learning across all runs, with a resulting number of steps for each run between 50 million for the easiest task ($\mathtt{seekavoid\_arena\_01}$) and 200 million for some of the hard tasks. Finally we built a separate offline RL dataset for every run and every task. See more details about these datasets in Table~\ref{tab:dmlab_data}.

Additionally, for the $\mathtt{seekavoid\_arena\_01}$ task we ran two fully trained snapshots of our R2D2 agents on the environment with different levels of noise ($\epsilon=0,0.01,0.1,0.25$ for $\epsilon$-greedy action selection). We recorded all interactions with the environment and generated a different offline RL dataset containing 10 million actor steps for every agent and every value of $\epsilon$.

\begin{figure}
    \centering
    \includegraphics[width=\columnwidth]{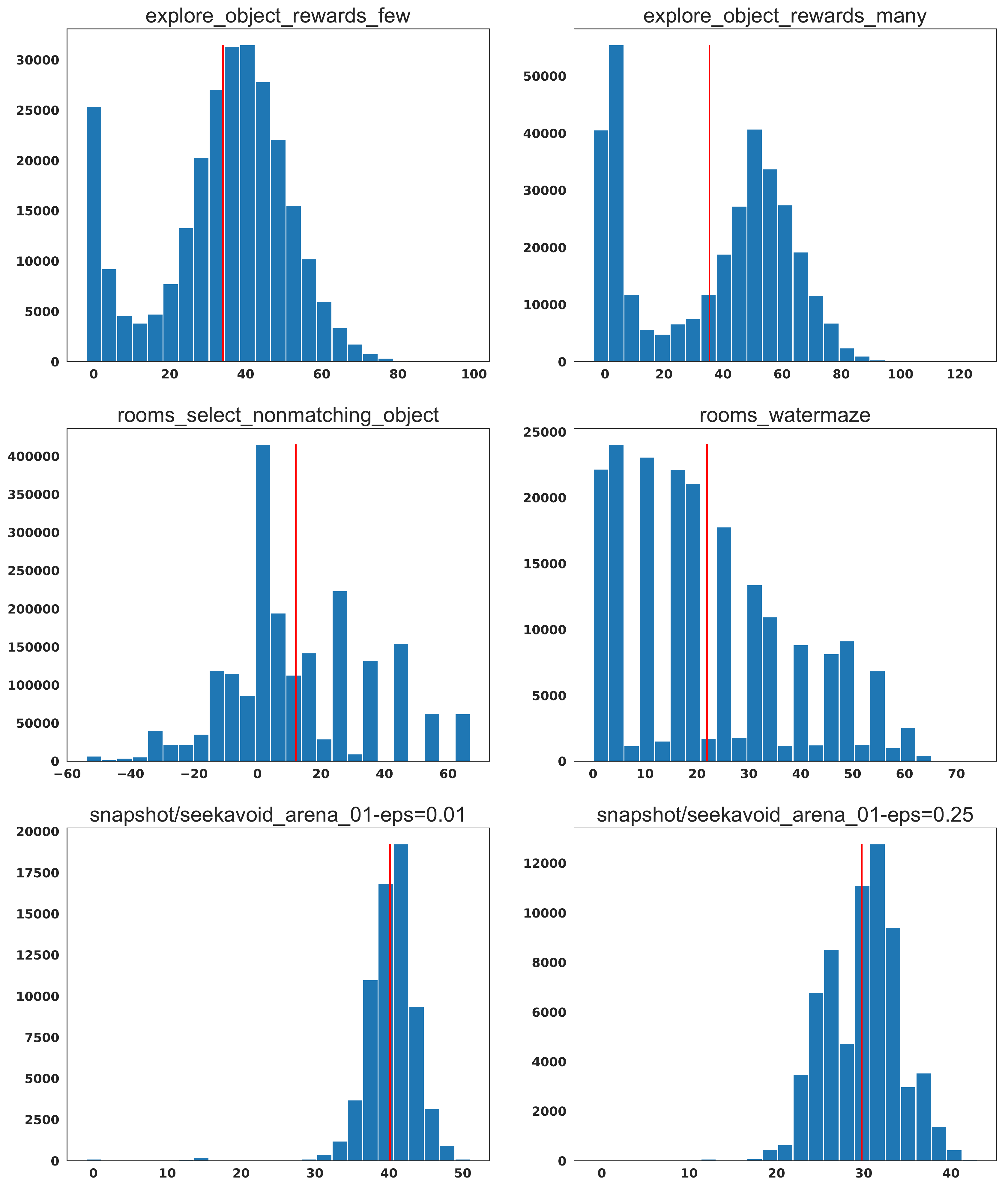}
    \caption{{\bf DeepMind Lab Reward Distribution:} We show the reward distributions for the DeepMind Lab datasets. The vertical red line indicates the average episodic return in the datasets.}
    \label{fig:dmlab_task_rewards_dist}
\end{figure}
\begin{table}[htp!]
\setlength{\tabcolsep}{4pt}
\small	
\centering
\caption{\textbf{DeepMind Lab dataset details.} For training data, reward is measured as the maximum over training of the average reward over runs for the same task. For snapshot data, reward is just an average over all episodes recorded using the same level of noise.}
\label{tab:dmlab_data}
\small
\begin{tabular}{lccccc}
\toprule
Task & Episode Length & Datasets & Episodes (K) & Steps (M) & Reward \\
\midrule
$\mathtt{seekavoid\_arena\_01}$ & 300 & 5 & 667.1 & 200.1 & 39.0 \\
$\mathtt{seekavoid\_arena\_01}$ snapshot ($\epsilon=0$) & 300 & 2 & 66.7 & 20 & 40.4 \\
$\mathtt{seekavoid\_arena\_01}$ snapshot ($\epsilon=0.01$ & 300 & 2 & 66.7 & 20 & 40.1 \\
$\mathtt{seekavoid\_arena\_01}$ snapshot ($\epsilon=0.1$) & 300 & 2 & 66.7 & 20 & 36.9 \\
$\mathtt{seekavoid\_arena\_01}$ snapshot ($\epsilon=0.25$) & 300 & 2 & 66.7 & 20 & 29.7 \\
$\mathtt{explore\_object\_rewards\_few}$ & 1350 & 3 & 178.3 & 240.7 &  51.5 \\
$\mathtt{explore\_object\_rewards\_many}$ & 1800 & 3 & 334.1 & 601.4 & 64.5 \\
$\mathtt{rooms\_select\_nonmatching\_object}$ & 180 & 3 & 2001.1 & 360.2 & 32.5 \\
$\mathtt{rooms\_watermaze}$ & 1800 & 3 & 201.8 & 363.3 & 48.8 \\
\bottomrule
\end{tabular}
\end{table}

\section{Experiment Details}
\label{sec:hyperparam_details}
We used the Adam optimizer \citep{kingma2014adam} for all our experiments. For details on the used hyperparameters, refer to the Table \ref{table:bsuite_hparams} for \texttt{bsuite}, Table \ref{table:atari_hparams} for Atari, and Table \ref{table:deepmind_lab_hparams} for DeepMind Lab. Our evaluation protocol is described below, in Section \ref{sec:evaluation_protocol}. On Atari experiments, we have normalized the agents' scores as described in \citep{gulcehre2020rl}.   On Atari, in all our experiments we report the median normalized score only as accustomed in the literature for reporting results on Atari.

\paragraph{Atari Hyperparameters:} On Atari we directly used the baselines and the hyperparameters reported in \citep{gulcehre2020rl}, to get the detailed Atari results on test set we communicated with the authors. We have run additional CQL and our own models with ranking regularization and reparameterization. For CQL we have finetuned both the learning rate from the grid $\left[8e-5,~1e-4,~3e-4\right]$. For our own proposed models we have only tuned the learning rate from the grid $\left[8e-5,~1e-4,~3e-4\right]$ and the ranking regularization hyperparameter from the grid $\left[0.005, 0.05, 0.01, 0.1, 1\right]$. We have fixed the rest of the hyperparameters. As mentioned earlier, we have only used the online policy selection games for finetuning the hyperparameters. As a result of our grid search, we have used learning rate of $1e-4$ for CQL and our models. We have used $0.01$ for the $\alpha$ hyperparameter of CQL. $0.005$ seems to be the most optimal hyperparameter choice for the ranking regularization hyperparameter on most tasks that we have tried it on.  We provide the details of the Atari hyperparameters and compute infrastructure in Table \ref{table:atari_hparams}.
\begin{table}[ht]
\small
\caption{\textbf{Atari experiments' hyperparameters.} The top section of the table corresponds to the shared hyperparameters of the offline RL methods and the bottom section of the table contrasts the hyperparameters of Online vs Offline DQN.}
\centering
\begin{tabular}{lrr}
\toprule
Hyperparameter & \multicolumn{2}{r}{setting (for both variations)} \\
\midrule
Discount factor && 0.99 \\
Mini-batch size && 256 \\
Target network update period & \multicolumn{2}{r}{every 2500 updates} \\
Evaluation $\epsilon$ && $0.4^8$ \\
$Q$-network: channels && 32, 64, 64 \\
$Q$-network: filter size && $8\times8$, $4\times4$, $3\times3$\\
$Q$-network: stride && 4, 2, 1\\
$Q$-network: hidden units && 512 \\
Training Steps && 2M learning steps \\
Hardware && Tesla V100 GPU \\
Replay Scheme && Uniform \\
\midrule
Hyperparameter & Online & Offline\\
\midrule
Min replay size for sampling & 20,000 & - \\
Training $\epsilon$~(for $\epsilon$-greedy exploration) & 0.01 & - \\
$\epsilon$-decay schedule & 250K steps & - \\
Fixed Replay Memory & No & Yes \\
Replay Memory size & 1M steps & 2M steps \\
Double DQN & No & Yes \\
\bottomrule
\end{tabular}
\label{table:atari_hparams}
\end{table}
\paragraph{bsuite Hyperparameters:} 
Our hyperparameter selection protocol for bsuite is the same as Atari. The main difference between the Atari and bsuite experiments is the network architecture that has been used. We provide the details of hyperparameters used for bsuite in Table \ref{table:bsuite_hparams}.
\begin{table}[ht!]
\small
\caption{\textbf{\texttt{bsuite} experiments' hyperparameters.} The top section of the table corresponds to the shared hyperparameters of the offline RL methods and the bottom section of the table contrasts the hyperparameters of Online vs Offline DQN.}
\centering
\begin{tabular}{lrr}
\toprule
Hyperparameter & \multicolumn{2}{r}{setting (for both variations)} \\
\midrule
Discount factor && 0.99 \\
Mini-batch size && 128 \\
Target network update period & \multicolumn{2}{r}{every 2500 updates} \\
Evaluation $\epsilon$ && $0.4^8$ \\
$Q$-network: & & an MLP \\
$Q$-network: hidden units && $56, 56, \text{num\_actions}$ \\
Training Steps & & 2M learning steps \\
Hardware & & Tesla V100 GPU \\
Replay Scheme & & Uniform \\
\midrule
Hyperparameter & Online & Offline\\
\midrule
Min replay size for sampling & 20,000 & - \\
Training $\epsilon$~(for $\epsilon$-greedy exploration) & 0.01 & - \\
$\epsilon$-decay schedule & 250K steps & - \\
Fixed Replay Memory & No & Yes \\
Replay Memory size & 1M steps & 2M steps \\
Double DQN & No & Yes \\
\bottomrule
\end{tabular}
\label{table:bsuite_hparams}
\end{table}

\paragraph{DeepMind Lab Hyperparameters:} On DeepMind Lab experiments, we tuned the hyperparameters of each model individually on each level separately. We have tuned the learning rate and the method-specific hyperparameters for each model from the same grid that we have used for Atari. For CQL, the specific hyperparameter that we tuned in addition to the learning rate is the regularization hyperparameter $\alpha$. For our own baselines with the ranking regularization, we fixed the ranking regularization co-efficient to the best value we found on Atari, and only tuned the margin hyperparameter $\nu$ from the grid $\{0.05,0.5\}$. All our algorithms use n-step returns in our DeepMind Lab experiments, where $n$ is fixed to $5$ in all our experiments. Thus both behavior value estimation and Q-learning experiments use 5 steps of unrolls for learning. We provide the details of the Deepmind Lab hyperparameters and details of compute infrastructure in Table \ref{table:deepmind_lab_hparams}.

\begin{table}[ht]
\small
\caption{\textbf{Deepmind Lab experiments' hyperparameters.} The top section of the table corresponds to the shared hyperparameters of the offline RL methods and the bottom section of the table contrasts the hyperparameters of Online vs Offline DQN.}
\centering
\begin{tabular}{lrr}
\toprule
Hyperparameter & \multicolumn{2}{r}{setting (for both variations)} \\
\midrule
Discount factor && 0.997 \\
Target network update period & \multicolumn{2}{r}{every 400 updates} \\
Evaluation $\epsilon$ && $0.4^8$ \\
Importance sampling exponent && 0.6 \\
Architecture & \multicolumn{2}{r}{Canonical R2D2 \citep{kapturowski2018recurrent}} \\
\midrule
Hyperparameter & Online & Offline\\
\midrule
Hardware & 4x TPUv2 & 4x Tesla V100 GPU \\
Training Steps & 50-200M actor steps & 50K learning steps \\
Sequence Length & 120 (40 burn-in) & Full episode \\
Mini-batch size & 32 & 8 \\
Training $\epsilon$~(for $\epsilon$-greedy exploration) & $0.4, ..., 0.4^8$ & - \\
Replay Scheme & Prioritized (exponent 0.9) & - \\
Min replay size for sampling & 600K steps & - \\
Replay Memory size & 12M steps & 50-200M steps \\
\bottomrule
\end{tabular}
\label{table:deepmind_lab_hparams}
\end{table}

\subsection{Evaluation protocol}
\label{sec:evaluation_protocol}
To evaluate the performance of the various methods, we use the following protocol:
\begin{enumerate}
    \item We sweep over a small (5-10) sets of hyperparameter values for each of the methods.
    \item We independently train each of the models on 5 datasets generated by running the behavior policy with 5 different seeds (ie. producing 25-50 runs per problem setting and method).
    \item We evaluate the produced models in the original environments (without the noise).
    \item We average the results over seeds and report the results of the best hyperparameter for each method.
\end{enumerate}

\subsubsection{Evaluation method}
To evaluate models (step 3. above), in the case of \texttt{bsuite} and DeepMind Lab we ran an evaluation job in parallel to the training one. It repeatedly read the learner's checkpoint and produced evaluation results during training. We report the average of the evaluation scores over the last 100 learning steps.

In the case of the Atari environments, instead of averaging performance during the final steps of learning, we take the final snapshot produced by a given method and evaluate it on a `100` environment steps after the training finished.

\subsection{Atari Offline Policy Selection Results}
In Table \ref{tab:atari_testing_results}, we show the performance of our baselines on different Atari Offline Policy selection games. We show that $\mathtt{R}$-$\mathtt{BVE}$ outperforms other approaches significantly.
\begin{table}[htbp!]
\centering
\caption{{\bf Atari Offline Policy Selection Results}: In this table, we list the median normalized performance of different baselines across all RLU Atari offline policy selection games.}
\label{tab:atari_testing_results}
\begin{tabular}{@{}l|l@{}}
\toprule
Name & Normalized Score \\ \midrule
BC   & 50.8             \\ \midrule
DDQN & 83.1             \\ \midrule
CQL  & 98.9             \\ \midrule
BCQ  & 102.6            \\ \midrule
IQN  & 104.8            \\ \midrule
REM  & 104.7            \\ \midrule
$\mathtt{R}$-$\mathtt{DQN}$  & { 108.2} 
\\ \midrule
$\mathtt{R}$-$\mathtt{BVE}$  & {\bf 109.1} 
\\ \bottomrule
\end{tabular}
\end{table}

\subsection{DeepMind Lab Detailed Results}

In Table \ref{tab:dmlab_results}, we have shown the results on the Deepmind Lab datasets. It is possible see from these numerica results that $\mathtt{R}$-$\mathtt{BVE}$ outperforms other approaches and $\mathtt{BVE}$ is still very competitive.
\begin{table}[ht]
\caption{{\bf Detailed Results on the DeepMind Lab:} We provide the detailed results for each DeepMind levels along with the standard errors.}
\label{tab:dmlab_results}
\small
\begin{tabular}{@{}l|l|l|l|l|l@{}}
                                   & BC & R2D2 & CQL & $\mathtt{BVE}$ & $\mathtt{R}$-$\mathtt{BVE}$ \\ \midrule
$\mathtt{explore\_object\_rewards\_many}$ & 1.8  $\pm$  1.0   &   19.8  $\pm$ 4.0    & 23.8  $\pm$ 5.1   &  23.7  $\pm$ 3.8 & {\bf 31.4}  $\pm$ 1.7   \\ \midrule
$\mathtt{explore\_object\_rewards\_few}$  & 2.9  $\pm$ 1.4   & 8.5  $\pm$   3.4  & 9.3   $\pm$ 2.5  & 7.6  $\pm$ 2.1 & {\bf 13.4}  $\pm$  2.6 \\ \midrule
$\mathtt{rooms\_watermaze}$ & 0.1   $\pm$  0.1 & 2.7    $\pm$  1.4 & 4.0  $\pm$  3.7  & 9.9   $\pm$ 2.7 & {\bf 14.1  $\pm$ 4.2  }\\ \midrule
$\mathtt{rooms\_select\_nonmatching\_object}$ & 1.1  $\pm$ 4.6 & 5.4   $\pm$ 2.3   &  3.4  $\pm$ 2.4 & 9.4  $\pm$ 2.3   & {\bf 9.6}  $\pm$ 3.2 \\ \midrule
$\mathtt{seekavoid\_arena\_01}$, $\epsilon=0$    & {\bf 28.02} $\pm$ 7.6  &   4.7 $\pm$  3.0  & 12.8 $\pm$ 10.7  &  4.4 $\pm$ 0.9   & 17.07 $\pm$ 10.1   \\ \midrule
$\mathtt{seekavoid\_arena\_01}$, $\epsilon=0.01$ & {\bf 32.4} $\pm$ 1.3   &    5.5 $\pm$ 1.6    & 12.7 $\pm$ 5.4     & 4.1 $\pm$ 1.8  & 19.8 $\pm$ 4.9   \\ \midrule
$\mathtt{seekavoid\_arena\_01}$, $\epsilon=0.1$  & 18.9 $\pm$ 14.4 &   8.6 $\pm$ 3.0   & 16.3 $\pm$ 7.7    & 11.775 $\pm$ 4.5  & {\bf 31.8} $\pm$ 4.7  \\ \midrule
$\mathtt{seekavoid\_arena\_01}$, $\epsilon=0.25$ & 17.46 $\pm$ 7.5    &  13.5 $\pm$ 5.1    & 13.5 $\pm$ 5.06    & 9.0 $\pm$ 0.25   &  {\bf 25.57} $\pm$ 7.0 
\end{tabular}
\end{table}

\section{Ranking Regularization}

We propose a family of methods that prevent the extrapolation error by suppressing the values of the actions that are not in the dataset. We achieve that by ranking the actions in the training set higher than the ones that are not in the training set. For the learned Q-function the absolute values of actions do not matter, we are rather interested in relative ranking of the actions.
Given  $a_t$ is the action from the dataset. For all  $j \ne t$ and illustration purposes, the value iteration can be written as:
\begin{small}
\begin{align*}
\E[\max_{a}Q(s, a)] &\approx \E\left[P(Q(s, a_t) > Q(s, a_j)) Q(s, a_t)|t \in Max\right] + \E\left[P(Q(s, a_t) \ngtr Q(s, a_j)) Q(s, a_j)|j \in Max\right] \\
&= \E\left[P(Q(s, a_t) > Q(s, a_j)) Q(s, a_t)|t \in Max\right] + \E\left[(1-P(Q(s, a_t) > Q(s, a_j))) Q(s, a_j)|j \in Max\right] \\
&= \alpha\E\left[P(\Qhat(s, a_t) > \Qhat(s, a_j)) \Qhat(s, a_t)|t \in Max\right] + \alpha\E\left[(1-P(\Qhat(s, a_t) > \Qhat(s, a_j))) \Qhat(s, a_j)|j \in Max\right] \\
& = \alpha \left(\E\left[P(\Qhat(s, a_t) > \Qhat(s, a_j))\Qhat(s, a_t)|t \in Max\right] + \E\left[(1-P(\Qhat(s, a_t) > \Qhat(s, a_j)))\right]\right)\xi \\
\end{align*}
\end{small}
where  $\xi$ is an irreducible noise, because we can not gather additional data on $(s_t,~a_j)$, and we don't know the corresponding reward for it. This causes extrapolation error which accumulates through the bootstrapping in the backups as noted by \cite{kumar2019stabilizing}. We implicitly pull down the $P(Q(s, a_t) \ngtr Q(s, a_t))$ by ranking the actions in the dataset higher which pushes up $ P(Q(s, a_t) > Q(s, a_j))$. As a result, the extrapolation error in Q-learning would also reduce.

\subsection{Pairwise Ranking Loss for Q-learning}
\label{sec:pairwise_ranking_loss}
In this section, we discus the relationship between the pairwise ranking loss for Q-learning and the list-wise pairwise ranking losses.
\begin{align*}
    p_{tj} &= \sigmoid(\hat{Q}_{\theta}(s, a_t) - \hat{Q}_{\theta}(s, a_j)), \\
    \pi(a_t|s) &\approx \prod_{i=0,i \neq t}^{|\calA|}p_{ti} / Z,\\
    Z &=  \sum_{i=0}^{|\calA|} \prod_{j=0,j \neq i}^{|\calA|}p_{ij}.\\
    \calR(\theta) &= - \sum_{i=0}^{|\calA|}\log(p_{ti}), \\
    &= - \sum_{i=0,i \neq t}^{|\calA|}\log(\sigmoid(\hat{Q}_{\theta}(s, a_t) - \hat{Q}_{\theta}(s, a_i))), \\
     &= \sum_{i=0,i \neq t}^{|\calA|}\text{softplus}(\hat{Q}_{\theta}(s, a_i) - \hat{Q}_{\theta}(s, a_t)). \\
\end{align*}
We use a common approximation \citep{chen2009ranking,burges2005learning} of the $\text{softplus}$ with the $\max(\cdot, 0)$ function:
\begin{equation}
    \mathcal{C}(\theta) = \sum_{i=0,i \neq t}^{|\calA|} \max\left(\hat{Q}_{\theta}(s,a_i) - \hat{Q}_{\theta}(s,a_t) + \nu, 0\right)^2
    \label{eqn:quirk_reg}
\end{equation}
Imposing the constraint in Equation \eqref{eqn:quirk_reg} can be harmful when the dataset has large amount of suboptimal trajectories (or when the dataset is generated by a random policy.) Because this constraint can drive the value function to overestimate the values of suboptimal actions in the dataset. As a result, similar to \cite{wang2020critic}, we propose a filtering function to impose that constraints only on rewarding transitions:
\begin{equation}
    \mathcal{C}(\theta) =\exp(\hat{V}^{\pi_{\calB}}(s) - \E_{s\sim \data}[\hat{V}^{\pi_{\calB}}(s)]) \sum_{i=0,i \neq t}^{|\calA|}  \max\left(\hat{Q}_{\theta}(s,a_i) - \hat{Q}_{\theta}(s,a_t) + \nu, 0\right)^2
\end{equation}

\subsection{Relationship to the Policy Gradients}
It is possible to drive the formulation that we use for the ranking regularizer from the policy gradient theorem to show the relationship. The Ranking Policy Gradient Theorem formulates the optimization of long-term reward using a ranking objective as done in \cite{DBLP:conf/iclr/LinZ20}. The equations below illustrates the formulation process. Let us note that we apply the ranking regularization on the offline and off-policy data, such that the formalism below only works when the behavior policy and target policy are equivalent, when the transitions are coming from on-policy data. If the ranking regularizer is used on the on-policy data it approximates the policy gradients, but it will not on the off-policy data.

Our construction is based on direct policy differentiation~\citep{peters2008reinforcement,williams1992simple} where the objective function is to $\theta^\ast = \argmax_{\theta} J({\theta}) $ for $p_{ij} = \text{sigmoid}\left(Q(s, a_i) - Q(s, a_j)\right)$.  
\begin{align}
 \grad_{\theta} J({\theta}) =& \grad_{\theta} \sum_{\tau}\nolimits p_{\theta}(\tau) \hat{V}^{\pi_{\calB}}(s) \label{eq:rlpgobj} \\
=& \sum_\tau\nolimits p_{\theta}(\tau) \grad_{\theta}\log p_{\theta}(\tau)\hat{V}^{\pi_{\calB}}(s) \nonumber \\
=& \sum_\tau\nolimits p_{\theta}(\tau) \grad_{\theta}\log \left(p(s_0)\Pi_{t=1}^{T}\pi_{\theta}(a_t|s_t)p(s_{t+1}|s_t, a_t)\right)\hat{V}^{\pi_{\calB}}(s) \nonumber \\
=& \sum_\tau\nolimits p_{\theta}(\tau)  \sum_{t=1}^{T}\nolimits\grad_{\theta}\log\pi_{\theta}(a_t|s_t)\hat{V}^{\pi_{\calB}}(s) \nonumber \\
=& \E_{\tau \sim \pi_\theta}\left[\sum_{t=1}^{T}\nolimits \grad_{\theta} \log\pi_{\theta}(a_t|s_t) \hat{V}^{\pi_{\calB}}(s)\right ]\nonumber\\
=& \E_{\tau \sim \pi_\theta}\left[\sum_{t=1}^{T}\nolimits\grad_{\theta} \log\left(\prod\nolimits_{j=1, j\neq i}^{m} p_{ij}\right) \hat{V}^{\pi_{\calB}}(s)\right]\nonumber \\
=& \E_{\tau \sim \pi_\theta}\left[\sum_{t=1}^{T}\nolimits\grad_{\theta} \sum_{j=1, j\neq i}^{m}\nolimits \log\left(\text{sigmoid}(p_{ij}) \right)\hat{V}^{\pi_{\calB}}(s)\right]\label{eq:rpg_ori} \\
=& -\E_{\tau \sim \pi_\theta}\left[\sum_{t=1}^{T}\nolimits\grad_{\theta} \sum_{j=1, j\neq i}^{m}\nolimits \text{softplus}(p_{ji})\hat{V}^{\pi_{\calB}}(s)\right]\nonumber\\
\approx& -\E_{\tau \sim \pi_\theta}\left[\sum_{t=1}^{T}\nolimits\grad_{\theta} \left(\sum_{j=1, j\neq i}^{m}\nolimits \max\left(Q(s, a_i)-Q(s, a_j), 0\right)\right)\hat{V}^{\pi_{\calB}}(s)\right],
\label{eq:rpg_first}
\end{align}
with baseline $\E_{s\sim \data}[\hat{V}^{\pi_{\calB}}(s)]$ it will be,
\begin{align}
   \approx& -\E_{\tau \sim \pi_\theta}\left[\sum_{t=1}^{T}\nolimits\grad_{\theta} \left(\sum_{j=1, j\neq i}^{m}\nolimits \max\left(Q(s, a_i)-Q(s, a_j,0)\right)\right)\left(\hat{V}^{\pi_{\calB}}(s)-\E_{s\sim \data}[\hat{V}^{\pi_{\calB}}(s)]\right)\right] \label{eq:rpg_loss}
\end{align}
Then we apply the $\exp(\cdot)$ transformation on $(\hat{V}^{\pi_{\calB}}(s)-\E_{s\sim \data}[\hat{V}^{\pi_{\calB}}(s)]$ to impose this loss loss mostly on the rewarding trajectories, and we can turn the maximization problem to a minimization one with a flip of sign:
\begin{align}
   =& \E_{\tau \sim \pi_\theta}\left[\sum_{t=1}^{T}\nolimits\grad_{\theta} \left(\sum_{j=1, j\neq i}^{m}\nolimits \max\left(Q(s, a_i)-Q(s, a_j), 0\right)\right)\exp\left(\hat{V}^{\pi_{\calB}}(s)-\E_{s\sim \data}[\hat{V}^{\pi_{\calB}}(s)]\right)\right] \label{eq:quirk_loss}
\end{align}
where the trajectory is a series of state-action pairs from $t=1,...,T$, i.e. $\tau = \left(s_1,a_1,s_2,a_2,...,s_T\right)$. The gradients in \eqref{eq:quirk_loss} would exactly be the gradients of the ranking regularization.

\end{document}